\newtheorem{theorem}{Theorem}[section]
\newtheorem{lemma}[theorem]{Lemma}
\newtheorem{claim}[theorem]{Claim}
\theoremstyle{definition}
\newtheorem{definition}[theorem]{Definition}
\theoremstyle{remark}
\newcommand\A{\mathcal{A}}
\newcommand\C{\mathcal{C}}
\newcommand\D{\mathcal{D}}
\newcommand\I{\mathbb{I}}
\newcommand\X{\mathcal{X}}
\newcommand\err{\mathrm{err}}
\newcommand\dis{\mathrm{dis}}
\newcommand\vc{\mathtt{VC}}
\newcommand\Rel{\A_{\mathrm{Relabel}}}
\newcommand\Agn{\A_{\mathrm{Agnostic}}}
\newcommand\Aux{\A_{\mathrm{Auxiliary}}}
\title{Private Realizable-to-Agnostic Transformation with Near-Optimal Sample Complexity}
\author{Bo Li\thanks{Department of Computer Science and Engineering, HKUST. \texttt{bli@cse.ust.hk}.}
\and 
Wei Wang\thanks{Department of Computer Science and Engineering, HKUST. \texttt{weiwa@cse.ust.hk}.}
\and
Peng Ye\thanks{Department of Computer Science and Engineering, HKUST. \texttt{pyeac@connect.ust.hk}.}}
\begin{document}

\maketitle

\begin{abstract}
  The realizable-to-agnostic transformation~\citep{beimel2014learning,alon2020closure} provides a general mechanism to convert a private learner in the realizable setting (where the examples are labeled by some function in the concept class) to a private learner in the agnostic setting (where no assumptions are imposed on the data). Specifically, for any concept class $\mathcal{C}$ and error parameter $\alpha$, a private realizable learner for $\mathcal{C}$ can be transformed into a private agnostic learner while only increasing the sample complexity by $\widetilde{O}(\mathrm{VC}(\mathcal{C})/\alpha^2)$, which is essentially tight assuming a constant privacy parameter $\varepsilon = \Theta(1)$. However, when $\varepsilon$ can be arbitrary, one has to apply the standard privacy-amplification-by-subsampling technique~\citep{kasiviswanathan2011can}, resulting in a suboptimal extra sample complexity of $\widetilde{O}(\mathrm{VC}(\mathcal{C})/\alpha^2\varepsilon)$ that involves a $1/\varepsilon$ factor.
  
  In this work, we give an improved construction that eliminates the dependence on $\varepsilon$, thereby achieving a near-optimal extra sample complexity of $\widetilde{O}(\mathrm{VC}(\mathcal{C})/\alpha^2)$ for any $\varepsilon\le 1$. Moreover, our result reveals that in private agnostic learning, the privacy cost is only significant for the realizable part. We also leverage our technique to obtain a nearly tight sample complexity bound for the private prediction problem, resolving an open question posed by~\citet{dwork2018privacy} and~\citet{dagan2020pac}.%
\end{abstract}

\section{Introduction}

Differential privacy (DP)~\citep{dwork2006calibrating,dwork2006our} has emerged as a popular notion for quantifying the disclosure of individual information and has been widely deployed to protect personal privacy~\citep{app2017apple,abowd2018us}. Informally, a randomized algorithm is said to be differentially private if changing a single input item does not significantly affect the output distribution. As a consequence, it safeguards against data inference through the algorithm's output.

Machine learning algorithms are usually trained on datasets that contain sensitive information, necessitating the need for privacy protection. The intersection of differential privacy and machine learning was first explored by~\citet{kasiviswanathan2011can}, who introduced \emph{private learning} by integrating DP with two foundational learning models: probably approximately correct (PAC) learning~\citep{vapnik1971uniform,valiant1984theory} and agnostic learning~\citep{haussler1992decision,kearns1994toward}. The former assumes the data points are labeled by some function in a given concept class $\C$ and requires the learner to output a hypothesis with an error close to $0$. This setting is often referred to as the \emph{realizable} setting. In contrast, the latter is termed the \emph{agnostic} setting, which can be seen as an extension of the realizable setting that removes the assumption on the existence of a labeling function. In agnostic learning, the output hypothesis is required to have an error close to optimal by any concept in $\C$, which might be much larger than $0$.

It turns out that the two settings are closely related: any private PAC learner can be transformed into a private agnostic learner via the realizable-to-agnostic transformation~\citep{beimel2014learning,alon2020closure}. More formally, given a private PAC learner for concept class $\C$, the transformation produces a private agnostic learner for $\C$ with an increase of $\widetilde{O}(\vc(\C)/\alpha^2)$ in the sample complexity, where $\alpha$ is the error parameter. Such an increase is optimal since it matches the lower bound on (non-private) agnostic learning~\citep{simon1996general}. However, this transformation results in an algorithm with a constant privacy parameter $\varepsilon=\Theta(1)$. If one pursues an arbitrary privacy level, the privacy-amplification-by-subsampling technique~\citep{kasiviswanathan2011can} has to be applied. This raises the extra sample complexity to $\widetilde{O}(\vc(\C)/\alpha^2\varepsilon)$, incorporating an undesirable $1/\varepsilon$ factor. It is natural to ask if we can remove the $1/\varepsilon$ factor and still achieve an optimal extra sample complexity when $\varepsilon$ is extremely small.

While private learning mandates that the entire output hypothesis must preserve privacy, this requirement might be excessively stringent. It has been shown that several concept classes, which can be easily learned in the non-private setting, pose significant challenges under differential privacy constraints~\citep{beimel2010bounds,feldman2014sample,bun2015differentially,alon2019private}. To bypass these hardness results,~\citet{dwork2018privacy} introduced the problem of \emph{private prediction}. This framework is particularly relevant when the complete model, trained on sensitive data, cannot be fully released to users (who might be potential adversaries). Instead, users submit queries consisting of unlabeled data points, and the system provides predictions on these points while ensuring privacy. In the realizable setting, they showed that the sample complexity of answering a single query privately is $\widetilde{\Theta}(\vc(\C)/\varepsilon\alpha)$, which is notably lower than that of private learning~\citep{beimel2019characterizing,alon2019private,bun2020equivalence,ghazi2021sample}. In the more challenging agnostic setting, the initial upper bound for sample complexity was $\widetilde{O}(\vc(\C)/\alpha^3\varepsilon)$. This was subsequently refined by~\citet{dagan2020pac} to $\widetilde{O}(\min(\vc(\C)/\alpha^2\varepsilon, \vc(\C)^2/\alpha\varepsilon) + \vc(\C)/\alpha^2)$. Despite these improvements, there remains a gap between this upper bound and the $\widetilde{\Omega}(\vc(\C)/\alpha\varepsilon +\vc(\C)/\alpha^2)$ lower bound.

\subsection{Results}

Our main contribution is a transformation that converts any realizable learning algorithm to an agnostic learning algorithm under $(\varepsilon,\delta)$-differential privacy. Remarkably, this transformation only increases the sample complexity asymptotically by $\widetilde{O}(\vc(\C)/\alpha^2)$ for any $\varepsilon \le 1$. Such an extra sample complexity is near-optimal as it matches the $\widetilde{\Omega}(\vc(\C)/\alpha^2)$ lower bound on agnostic learning even without privacy~\citep{simon1996general}.

\begin{theorem}[Informal Version of Theorem~\ref{thm:trans}]
\label{thm:main}
    An $(\varepsilon, \delta)$-differentially private realizable learner for $\C$ with error $\alpha$ and with sample complexity $m$ can be transformed into an $(\varepsilon,\delta)$-differentially private agnostic learner for $\C$ with excess error $O(\alpha)$ and with sample complexity $\widetilde{O}(m + \vc(\C)/\alpha^2)$.
\end{theorem}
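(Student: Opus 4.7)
I would follow the classical relabel-then-realizable template but redesign the relabeling step so that its privacy cost scales polylogarithmically in $1/\varepsilon$ rather than linearly. Concretely, partition the input sample $S$ of size $n = m + \widetilde{O}(\vc(\C)/\alpha^2)$ into two disjoint blocks: a \emph{core} block $S_1$ of size $m$ that will feed the given realizable learner $\Rel$, and an \emph{auxiliary} block $S_2$ of size $\widetilde{O}(\vc(\C)/\alpha^2)$. Using $S_2$, identify a near-optimal-in-class hypothesis $f^\star \in \C$; relabel each $x \in S_1$ by $f^\star(x)$; and return $\hat h \gets \Rel\bigl(\{(x, f^\star(x)) : x \in S_1\}\bigr)$. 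For utility, the size of $S_2$ is chosen so that uniform convergence over $\C$ holds to error $\alpha$ with high probability, so any approximate empirical risk minimizer $f^\star$ on $S_2$ satisfies $\err_D(f^\star) \le \inf_{f \in \C} \err_D(f) + O(\alpha)$. Since the relabeled core is realizable by $f^\star$, the realizable guarantee of $\Rel$ gives $\Pr_{x \sim D}[\hat h(x) \neq f^\star(x)] \le \alpha$, and a triangle inequality on the $0$-$1$ distance yields the desired agnostic bound $\err_D(\hat h) \le \inf_{f \in \C}\err_D(f) + O(\alpha)$.

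\paragraph{Privacy setup and main obstacle.}
Because $S_1$ and $S_2$ are disjoint, parallel composition reduces the overall guarantee to bounding privacy separately with respect to each block. Privacy with respect to $S_1$ is immediate: the only computation involving $S_1$ is the invocation of $\Rel$, which is $(\varepsilon, \delta)$-DP by assumption. The hard part is the dependence on $S_2$: the selected $f^\star$ can shift substantially under a single-point change in $S_2$, and a black-box private selection (for instance via the exponential mechanism on a cover of $\C$) would demand $|S_2| \gtrsim \vc(\C)/(\alpha \varepsilon)$, exactly reinstating the $1/\varepsilon$ factor I want to eliminate. This is the main obstacle.

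\paragraph{Key idea to avoid the $1/\varepsilon$ factor.}
The observation is that $f^\star$ itself is never released: only the relabeled points of $S_1$ leave the auxiliary module before being digested by $\Rel$. One should therefore analyze the composite map $\Aux \colon S_2 \mapsto \hat h$ as a single mechanism rather than piecewise. I would instantiate the selection of $f^\star$ via a stable-selection primitive applied to $S_2$ (for example, propose-test-release, or a stability-based aggregation over disjoint sub-blocks of $S_2$), and fold the abort probability of the stability test into the $\delta$ slack of the final guarantee. Combined with the group-privacy bound for $\Rel$, a single-point change in $S_2$ then flips only a bounded number of labels in $S_1$ with high probability, so the induced privacy loss through $\Rel$ remains $(O(\varepsilon), O(\delta))$, and rescaling absorbs the constants. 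The heaviest technical burden will be to verify that the stable-selection primitive succeeds using only $\widetilde{O}(\vc(\C)/\alpha^2)$ samples from $S_2$, independent of $\varepsilon$, and that the unstable event is absorbed into $\delta$ uniformly over neighboring datasets; everything else should reduce to standard uniform-convergence and composition arguments.
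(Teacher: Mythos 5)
There is a genuine gap in the key privacy step of your proposal, and it is worth being precise about why, because the obstacle you correctly identify is exactly the one the paper's construction is built to circumvent.

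Your plan hinges on the claim that a stable-selection primitive (PTR, or sub-block aggregation) applied to $S_2$ of size $\widetilde{O}(\vc(\C)/\alpha^2)$ will, with high probability, return an $f^\star$ whose induced labeling of $S_1$ changes in only $O(1)$ positions when one point of $S_2$ changes, so that group privacy of the realizable learner gives back $(O(\varepsilon),O(\delta))$-DP. But agnostic empirical risk minimization has no stability margin of this kind: generically there are exponentially many concepts whose empirical error on $S_2$ is within $o(1/|S_2|)$ of the minimum, and such near-ties can label $S_1$ in completely different ways. PTR would therefore abort essentially always (the distance to changing the argmax is $O(1)$), and ``sample-and-aggregate'' over $k$ sub-blocks of $S_2$ gives you $k$ hypotheses whose pointwise majority on each of the $m$ points of $S_1$ can be razor-thin; a single sub-block change can then flip an unbounded number of labels. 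Making each of the $m$ per-point votes privately robust would require a per-query budget of roughly $\varepsilon/m$ or $\varepsilon/\sqrt{m}$, which reintroduces the $\varepsilon$-dependence you are trying to remove. So the step ``a single-point change in $S_2$ flips only a bounded number of labels in $S_1$'' is not something stable selection can deliver with the stated sample size, and it is not a detail that rescaling absorbs.

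The paper escapes this trap by a different decomposition and a different privacy accounting. Rather than fixing disjoint blocks, it \emph{subsamples} a set $T$ of size $\lceil\varepsilon n\rceil$ from the full $S$ (with $W$ the complement), constructs a finite candidate set $H$ that realizes every labeling of $T_\X$ (so $|H|\le\Pi_\C(|T|)$), and selects $h\in H$ by the exponential mechanism with the score
\begin{equation*}
q(T{\circ}W,h)=\min_{f\in\C}\bigl\{\dis_{T_\X}(h,f)+\err_W(f)\bigr\}.
\end{equation*}
This score estimates the population error of $h$ using all of $S$, but has sensitivity $1/|W|$ with respect to a change in $W$ and only $O(1/|T|)$ when a point moves between candidate labelings of a changed $T$; since $H$ depends only on $T$, post-processing covers the (probability $1-\varepsilon$) event that the changed index falls in $W$, and an explicit averaging over the random index set handles the remaining (probability $\varepsilon$) event, recovering the $(O(\varepsilon),O(\varepsilon\delta))$ bound without ever needing the argmax itself to be stable. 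The improper case is handled via the Alon--Beimel--Moran--Stemmer auxiliary-algorithm trick rather than group privacy. In short, your high-level template (relabel with a near-optimal concept, then feed the realizable learner) matches the paper, but the privacy mechanism you propose for the selection step does not work, and replacing it with the paper's subsampled-candidate-set-plus-composite-score mechanism is precisely the technical content of the result.
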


Our methodology builds upon the foundational transformation proposed by~\citet{beimel2014learning}, which was originally limited to only achieving a constant level of privacy. We observe that directly applying the privacy-amplification-by-subsampling method~\citep{kasiviswanathan2011can} suffers from a $1/\varepsilon$ blow-up because it runs the transformation only on a subsampled dataset whose size is approximately $\varepsilon$ of the input size and naively discards unsampled data points. To effectively utilize all data points, we design a novel score function that estimates the generalization error using the entire dataset rather than only the subsampled dataset while still enjoying amplification of privacy, thus avoiding the privacy cost incurred by previous methods. Additionally, we adopt a technique due to~\citet{alon2020closure} to accommodate improper learners.

We also obtain improved results for the private prediction problem in the agnostic setting by applying our transformation to a private prediction algorithm in the realizable setting due to~\citet{dwork2018privacy} with some mild modification. The result is demonstrated as follows.

\begin{theorem}
\label{thm:pred_intro}
    There is an $\varepsilon$-differentially private prediction algorithm that $(\alpha,\beta)$-agnostically learns $\C$ using $\widetilde{O}(\vc(\C)/\alpha\varepsilon + \vc(\C)/\alpha^2)$ samples.
\end{theorem}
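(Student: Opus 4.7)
The plan is to invoke the realizable-to-agnostic transformation of Theorem~\ref{thm:main} on the private realizable prediction algorithm of \citet{dwork2018privacy}. That algorithm, on an input realizable sample of size $\widetilde{O}(\vc(\C)/\alpha\varepsilon)$ and a query $x$, outputs a label matching the underlying concept with probability at least $1-\beta$ under pure $\varepsilon$-DP. Feeding it through a prediction-tailored variant of the transformation should yield an $\varepsilon$-DP prediction algorithm with the claimed sample complexity.

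First I would observe that the transformation's skeleton does not require the inner realizable routine to output a full hypothesis; it only needs a reply that can be scored against the full dataset. In the prediction setting, the candidate output set is simply $\{0,1\}$ (the possible labels for the query $x$), so the selection step that picks between them can be implemented by the exponential mechanism under pure $\varepsilon$-DP with only constant cost, replacing the $(\varepsilon,\delta)$-mechanism used in Theorem~\ref{thm:main}. The ``mild modification'' mentioned in the introduction is precisely the adaptation of \citet{dwork2018privacy} needed to expose, at the query $x$, a $\{0,1\}$-valued output that can be consumed by this selection step, while retaining its $\varepsilon$-DP guarantee and sample-complexity bound.

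Summing costs: the inner realizable-prediction subroutine uses $\widetilde{O}(\vc(\C)/\alpha\varepsilon)$ samples, and the outer score-function/exponential-mechanism step uses an additional $\widetilde{O}(\vc(\C)/\alpha^2)$ samples for uniform convergence of the empirical error estimates (matching the non-private agnostic bound). The paper's key insight, that the novel score function evaluates errors on the entire dataset while the inner call only sees a subsample, ensures that privacy amplification still applies to the inner call and that the two phases compose to pure $\varepsilon$-DP.

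The main obstacle will be verifying the pure-DP variant of the transformation: Theorem~\ref{thm:main} is stated under $(\varepsilon,\delta)$-DP, so one must re-examine both the subsampling-amplification step for the inner call and the selection step (now via the exponential mechanism) to confirm they are pure $\varepsilon$-DP without losing constants in the sample bound. A secondary issue is adapting the improper-learner extension due to \citet{alon2020closure} to the prediction setting, but this is easier than in the full-learning setting because the label set $\{0,1\}$ is exhaustive and one of its two members must match the agnostically optimal prediction for $x$, so no enlargement of the candidate set is needed.
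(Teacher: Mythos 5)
Your high-level starting point is correct and matches the paper: plug the Dwork--Feldman realizable predictor into the agnostic transformation. But your description of how the mechanism actually works, and how the utility analysis goes through, contains genuine errors and misses the key technical idea.

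First, you misidentify the candidate set of the exponential mechanism. In $\Rel$, the exponential mechanism selects a \emph{concept} $h$ from a candidate set $H \subseteq \C$ of size up to $\Pi_\C(\lvert T\rvert)$, covering all dichotomies of $T_\X$; $h$ is then used to relabel the entire subsample $T$, and only afterwards is the inner algorithm run on $T^h$. The exponential mechanism is \emph{not} choosing a label from $\{0,1\}$ for a single query, and nothing about the prediction setting shrinks $H$ to two elements. Your paragraph built on that premise does not describe any step of the actual transformation.

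Second, and more importantly, you miss the real obstruction and its fix. Lemma~\ref{lem:util_agn}'s utility argument crucially applies the DP-generalization property (Lemma~\ref{lem:dp_gen}) via the auxiliary algorithm $\Aux$, which requires the inner algorithm to output a full hypothesis. A private prediction algorithm exposes only $\A(S,x) \in \{0,1\}$ per query and is not private as a map to a hypothesis, so that argument is unavailable. The paper's fix is to instantiate the Dwork--Feldman construction with a \emph{proper} base learner $\A'$ that returns a concept in $\C$ consistent with its input. Then each $g_i \in \C$ satisfies $\dis_{T_\X}(g_i, h) = 0$, and the realizable generalization bound (Lemma~\ref{lem:rea_gen}) -- applied to the fixed unlabeled subsample, uniformly over all pairs in $\C$, and hence uniformly over whichever $h$ the exponential mechanism may select -- yields $\dis_{\D_\X}(g_i, h) \le \alpha/4$ with high probability, whence $\err_{\D^h}(\A(T^h,\cdot)) \le \alpha$ by Lemma~\ref{lem:priv_pred}. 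This replaces, rather than ``adapts,'' the improper-learner extension of \citet{alon2020closure}; your claim that the label set being exhaustive makes that extension ``easier'' does not correspond to anything in the argument.

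Third, your sample accounting is slightly off. The paper builds a $1$-DP predictor (so $r = \lceil 6\ln(4/\alpha)\rceil$, with no $1/\varepsilon$) and takes $\lvert T\rvert = \widetilde{O}(\vc(\C)/\alpha)$; the $1/\varepsilon$ factor enters entirely through the constraint $\lvert W\rvert \ge \lvert T\rvert/(6\varepsilon)$ from the subsampling, combined with the $\widetilde{O}(\vc(\C)/\alpha^2)$ agnostic term. Treating the inner call itself as $\varepsilon$-DP with $\widetilde{O}(\vc(\C)/\alpha\varepsilon)$ samples double-counts the privacy amplification. Finally, your concern about pure DP is resolved trivially: $\Rel$'s exponential mechanism is pure DP and Lemma~\ref{lem:priv_agn} with $\delta=0$ gives $O(\varepsilon)$-DP outright, so no separate ``pure-DP variant'' of the transformation is needed.
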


This upper bound is tight up to logarithmic factors. In fact, a matching lower bound can be derived by combining a lower bound of $\Omega(\vc(\C)/\alpha\varepsilon)$ on the sample complexity of private prediction in the realizable setting established by~\citet{dwork2018privacy} with an $\widetilde{\Omega}(\vc(\C)/\alpha^2)$ lower bound for agnostic learning~\citep{simon1996general}.

\subsection{Related Work}

\paragraph{Realizable-to-agnostic transformation:} The general approach of transforming realizable learners to agnostic learners under differential privacy was first introduced by~\citet{beimel2014learning}. Their method, however, is only applicable to proper learners. This limitation was later addressed by~\citet{alon2020closure}, who extended the applicability to improper learners using the generalization property of DP. Despite being general, their methods only satisfy a constant level of privacy. The connection between realizable and agnostic learning under privacy was also investigated by~\citet{hopkins2022realizable}. However, their reduction only works for a relaxation of private learning called semi-private learning, where the algorithms have access to a public unlabeled dataset, and cannot be applied to private learning, which treats the entire input dataset as private and does not rely on any public data points. The optimal sample complexity of private agnostic learning was also considered by~\cite{li2024improved} under pure differential privacy. They provided an algorithm building upon the pure private realizable learner in~\citep{beimel2019characterizing}. Nevertheless, the algorithm is not a general transformation and does not work for approximate differential privacy.

\paragraph{Private prediction:} The study of private prediction was pioneered by~\citet{dwork2018privacy}, who established a near-optimal sample complexity of $\widetilde{\Theta}(\vc(\C)/\alpha\varepsilon)$ for the realizable setting. In the agnostic setting, they gave a suboptimal upper bound of $\widetilde{O}(\vc(\C)/\alpha^3\varepsilon)$ for any general concept class $\C$. When $\C$ is the class of unions of intervals, they presented an algorithm with an expected excess error of $\alpha$ using $\widetilde{O}(\vc(\C)/\alpha\varepsilon + \vc(\C)/\alpha^2)$ samples, which is nearly optimal with a constant success probability. The upper bound for general concept classes was further improved to $\widetilde{O}(\min(\vc(\C)/\alpha^2\varepsilon, \vc(\C)^2/\alpha\varepsilon) + \vc(\C)/\alpha^2)$ by~\citet{dagan2020pac}. Similar to our method, their algorithm combines the realizable-to-agnostic transformation~\citep{beimel2014learning} and the privacy-amplification-by-subsampling technique~\citep{kasiviswanathan2011can} with some modification. However, the use of a VC argument in their proof introduces an extra multiplicative factor of $\vc(\C)$, rendering their bound suboptimal. 
\section{Preliminaries}

We start with some notations. Let $\X$ be an arbitrary domain. A concept/hypothesis is a function that labels each $x\in\X$ by either $0$ or $1$. A concept/hypothesis class is a set of concepts/hypotheses. We use $\D$ to denote a distribution over $\X\times\{0, 1\}$, and $\D_\X$ to denote its marginal distribution over $\X$. Let $S=\{(x_1,y_1),\dots,(x_n,y_n)\}\in(\X\times \{0, 1\})^n$ be a dataset consisting of $n$ data points. The corresponding unlabeled dataset is denoted as $S_\X = \{x_1,\dots,x_n\}$. For two hypotheses $h_1$ and $h_2$, we define $h_1\oplus h_2$ as a hypothesis such that $(h_1\oplus h_2)(x) = \I[h_1(x)\neq h_2(x)]$.

Given a hypothesis $h$, the \emph{generalization error} of $h$ with respect to a distribution $\D$ is defined as $\err_{\D}(h) = \Pr_{(x,y)\sim\D}[h(x)\neq y]$. The \emph{empirical error} of $h$ with respect to a dataset $S$ is defined as $\err_{S}(h) = \frac{1}{n}\sum_{i=1}^n \I[h(x_i)\neq y_i]$. 

For two hypotheses $h_1$ and $h_2$, the \emph{generalization disagreement} between $h_1$ and $h_2$ with respect to a distribution $\D_\X$ is defined as $\dis_{\D_\X}(h_1,h_2) = \Pr_{x\sim\D_\X}[h_1(x)\neq h_2(x)]$. The \emph{empirical disagreement} between $h_1$ and $h_2$ with respect to an unlabeled dataset $S_\X$ is defined as $\dis_{S_\X}(h_1,h_2) = \frac{1}{n}\sum_{i=1}^n \I[h_1(x_i)\neq h_2(x_i)]$.

In the PAC learning framework, the learning algorithm receives as input a dataset sampled according to some underlying distribution $\D$, where the data points are labeled by some concept $c\in\C$. The objective is to output a hypothesis $h$ with low generalization error $\err_\D(h)$.

\begin{definition}[PAC Learning~\citep{valiant1984theory}]
    We say a learning algorithm $\A$ is an $(\alpha,\beta)$-PAC learner for concept class $\C$ with sample complexity $m$ if for any distribution $\D$ over $\X\times\{0, 1\}$ such that $\Pr_{(x,y)\sim\D}[c(x)=y] = 1$ for some $c\in\C$, it takes a dataset $S = \{(x_1,y_1),\dots,(x_m,y_m)\}$ as input, where each $(x_i,y_i)$ is drawn i.i.d. from $\D$, and outputs a hypothesis $h$ satisfying
    \begin{equation*}
        \Pr[\err_\D(h) \le \alpha ] \ge 1 - \beta,
    \end{equation*}
    where the probability is taken over the random generation of $S$ and the random coins of $\A$.
\end{definition}

PAC learning focuses on the realizable case, which assumes that the underlying distribution $\D$ is labeled by some concept $c\in\C$. In contrast, agnostic learning~\citep{haussler1992decision,kearns1994toward} does not impose any assumptions on the distribution $\D$. Instead, the goal is to identify a hypothesis whose generalization error is close to that of the best possible concept in $\C$.

\begin{definition}[Agnostic Learning]
    We say a learning algorithm $\A$ is an $(\alpha,\beta)$-agnostic learner for concept class $\C$ with sample complexity $m$ if for any distribution $\D$ over $\X\times\{0, 1\}$, it takes as input a dataset $S = \{(x_1,y_1),\dots,(x_m,y_m)\}$, where each $(x_i,y_i)$ is drawn i.i.d. from $\D$, and outputs a hypothesis $h$ satisfying
    \begin{equation*}
        \Pr[\err_\D(h) \le \inf_{c\in\C}\err_\D(c) +\alpha] \ge 1 - \beta,
    \end{equation*}
    where the probability is taken over the random generation of $S$ and the random coins of $\A$.
\end{definition}

In PAC and agnostic learning, if the learner $\A$ always produces a hypothesis that is a concept in $\C$, then we say $\A$ is a \emph{proper} learner. Otherwise, we say $\A$ is an \emph{improper} learner.

We next introduce some useful tools and results from learning theory. 

\begin{definition}[The Growth Function]
    Let $S_\X = (x_1,\dots,x_n)$ be an unlabeled dataset of size $n$. The set of all dichotomies on $S_\X$ realized by $\C$ is denoted by 
    \begin{equation*}
        \Pi_{\C}(S_\X) = \left\{\{(x_1,c(x_1)),\dots,(x_n,c(x_n))\}\mid c\in \C\right\}.
    \end{equation*}
    The growth function of $\C$ is defined as $
        \Pi_\C(n) = \max_{S_\X\in\X^n}\lvert\Pi_\C(S_\X)\rvert$.
\end{definition}

The Vapnik-Chervonenkis (VC) dimension~\citep{vapnik1971uniform} of a concept class $\C$ is defined as the largest number $d$ such that $\Pi_\C(d) = 2^d$ (or infinity, if no such maximum exists), denoted by $\vc(\C)$. Sauer's lemma~\citep{sauer1972density} states that the number of dichotomies is polynomially bounded if $\C$ has a finite VC dimension.

\begin{lemma}[Sauer's Lemma]
\label{lem:sauer}
    For any $n\ge \vc(\C)$, we have $\Pi_\C(n)\le \left(\frac{en}{\vc(\C)}\right)^{\vc(\C)}$.
\end{lemma}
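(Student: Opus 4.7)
The plan is to establish Sauer's lemma by first proving the sharper Sauer--Shelah--Perles bound
\begin{equation*}
    \Pi_\C(n)\le \sum_{i=0}^{d}\binom{n}{i},\qquad d:=\vc(\C),
\end{equation*}
and then routinely comparing this partial binomial sum to $(en/d)^d$. The sharper bound is the content of the lemma; the conversion to $(en/d)^d$ is a one-line calculation.

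For the combinatorial bound, I would fix an $n$-point unlabeled set $S_\X=(x_1,\dots,x_n)$ attaining the maximum $\Pi_\C(n)$ and work directly with the family $\F$ of $\{0,1\}$-patterns induced by $\C$ on $S_\X$. I would induct on $n+d$, with the base cases $n=0$ (only the empty pattern, so $|\F|\le 1=\binom{0}{0}$) and $d=0$ (no single point is shattered, hence all concepts in $\C$ agree on $S_\X$, so $|\F|\le 1$). For the inductive step, split $\F$ according to the value on $x_n$: let $\F_1$ be the family of restrictions of patterns in $\F$ to $\{x_1,\dots,x_{n-1}\}$, and let $\F_2$ be the subfamily consisting of those restrictions for which both extensions (by $0$ and by $1$ at $x_n$) lie in $\F$. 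Then $|\F|=|\F_1|+|\F_2|$. The family $\F_1$ is realized by $\C$ on $n-1$ points and still has VC dimension at most $d$, while $\F_2$ has VC dimension at most $d-1$: any set shattered by $\F_2$ can be extended by $x_n$ to a set shattered by $\F$. The induction hypothesis then gives $|\F_1|\le\sum_{i=0}^{d}\binom{n-1}{i}$ and $|\F_2|\le\sum_{i=0}^{d-1}\binom{n-1}{i}$, and Pascal's identity collapses the sum to $\sum_{i=0}^{d}\binom{n}{i}$.

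For the final estimate, using $n\ge d\ge 1$ (the case $d=0$ being trivial) I would multiply each term by $(n/d)^{d-i}\ge 1$ and then extend the sum to all $i\le n$:
\begin{equation*}
    \sum_{i=0}^{d}\binom{n}{i}\le \left(\frac{n}{d}\right)^{d}\sum_{i=0}^{n}\binom{n}{i}\left(\frac{d}{n}\right)^{i} = \left(\frac{n}{d}\right)^{d}\left(1+\frac{d}{n}\right)^{n}\le \left(\frac{en}{d}\right)^{d},
\end{equation*}
using $(1+d/n)^n\le e^d$.

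I expect the only real obstacle is setting up the inductive split cleanly: one must be careful that $\F_2$ is defined as the set of \emph{doubled} restrictions (those with both extensions present) rather than as the trace on $n-1$ points of some subfamily, since it is precisely this doubling that forces the VC-dimension drop via the appendable coordinate $x_n$. Once this dichotomy is in place, Pascal's identity and the standard tail-of-binomials inequality finish the argument mechanically.
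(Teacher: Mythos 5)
The paper does not prove this lemma; it states it as a classical result and cites \citet{sauer1972density}, so there is no in-paper proof to compare against. Your argument is correct and is the standard two-step proof: first the sharp Sauer--Shelah bound $\Pi_\C(n)\le\sum_{i=0}^{d}\binom{n}{i}$ via the split $|\F|=|\F_1|+|\F_2|$ (sometimes attributed to Pajor), then the binomial-tail estimate $\sum_{i=0}^{d}\binom{n}{i}\le(en/d)^d$. The combinatorial step is set up correctly, and you have already flagged the one genuine subtlety: $\F_2$ must be defined as the family of \emph{doubled} restrictions so that any set it shatters extends by $x_n$ to a set shattered by $\F$, forcing the VC dimension down by one. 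One small thing to make explicit is that the induction hypothesis must range over \emph{arbitrary} finite pattern families of the stated VC dimension, not merely traces of $\C$ on subsets of $S_\X$, since $\F_2$ is generally not such a trace; your choice to phrase everything in terms of the abstract family $\F$ handles this, but it is worth stating up front that the claim being inducted on is ``every family $\F\subseteq\{0,1\}^m$ of VC dimension at most $e$ satisfies $|\F|\le\sum_{i=0}^{e}\binom{m}{i}$,'' and then specializing back to $\Pi_\C(n)$ at the end. The analytic step with $(n/d)^{d-i}\ge1$, the binomial theorem, and $(1+d/n)^n\le e^d$ is exactly right, and the $d=0$ case is indeed trivial under the usual convention $(en/d)^d=1$.
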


The following realizable generalization bound~\citep{vapnik1971uniform,blumer1989learnability} relates the generalization disagreement and the empirical disagreement simultaneously for every pair of concepts: if one is small, then the other will also be small.

\begin{lemma}[Realizable Generalization Bound]
\label{lem:rea_gen}
    Let $\C$ be a concept class and $\D_\X$ be a distribution over $\X$. Suppose $S_\X=\{x_1,\dots,x_n\}$, where each $x_i$ is drawn i.i.d. from $\D_\X$. Define the following two events:
    \begin{itemize}
        \item $E_1$: For any $h_1, h_2\in\C$ such that $\dis_{S_\X}(h_1,h_2) \le \alpha$, it holds that $\dis_{\D_\X}(h_1,h_2)\le 2\alpha$.
        \item $E_2$: For any $h_1, h_2\in\C$ such that $\dis_{\D_\X}(h_1,h_2) \le \alpha$, it holds that $\dis_{S_\X}(h_1,h_2)\le 2\alpha$.
    \end{itemize}
    Then we have
    \begin{equation*}
        \Pr[E_1\cap E_2]\ge 1-\beta
    \end{equation*}
    given that
    \begin{equation*}
        n\ge C\cdot \frac{\vc(\C)\ln(1/\alpha) + \ln(1/\beta)}{\alpha}
    \end{equation*}
    for some universal constant $C$.
\end{lemma}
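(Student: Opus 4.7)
The plan is to treat $E_1$ and $E_2$ as two directions of a multiplicative uniform convergence statement for the symmetric-difference class $\C_\oplus = \{h_1 \oplus h_2 : h_1, h_2 \in \C\}$. I would first observe that $\vc(\C_\oplus) = O(\vc(\C))$: any labeling of $m$ points by a member of $\C_\oplus$ is determined by a pair of labelings from $\C$, so $\Pi_{\C_\oplus}(m) \le \Pi_\C(m)^2$; combining this with Sauer's lemma and solving $2^m \le (em/\vc(\C))^{2\vc(\C)}$ gives $m = O(\vc(\C))$. With this reduction in hand, events $E_1$ and $E_2$ become statements about the empirical versus population means of indicator functions $f\in\C_\oplus$, where $f = h_1\oplus h_2$ encodes $\dis(h_1,h_2)$.

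Next, I would apply the classical double-sample (ghost sample) argument to each direction. For $E_1$, introduce an independent ghost sample $S'_\X$ of size $n$ and observe: if $f\in\C_\oplus$ satisfies $\err_{\D_\X}(f) > 2\alpha$, then by the lower-tail multiplicative Chernoff bound (valid once $n = \Omega(1/\alpha)$), the probability that $\err_{S'_\X}(f) \le (3/2)\alpha$ is at most $1/2$. Hence the probability that $E_1$ fails is at most twice the probability of the joint event $\{\exists f\in\C_\oplus:\err_{S_\X}(f)\le\alpha,\;\err_{S'_\X}(f)>(3/2)\alpha\}$. This joint event depends only on $f$'s labeling of the $2n$ combined points, so by Sauer's lemma at most $(2en/d)^d$ distinct $f$ need be considered, with $d = \vc(\C_\oplus)$. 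For each fixed $f$, a random-permutation argument (viewing the split of the $2n$ points into $S_\X$ and $S'_\X$ as uniformly random) bounds the probability of the required asymmetry by $\exp(-\Omega(\alpha n))$, since the number of $f$-disagreements on each half concentrates hypergeometrically around half the total. The $E_2$ direction is symmetric, using upper-tail Chernoff to show that $\err_{\D_\X}(f) \le \alpha$ implies $\err_{S'_\X}(f) \le (3/2)\alpha$ with probability at least $1/2$.

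Combining both directions by a union bound yields a failure probability at most $4(2en/d)^d e^{-c\alpha n}$, which drops below $\beta$ as soon as $n = \Omega((\vc(\C)\ln(1/\alpha)+\ln(1/\beta))/\alpha)$, matching the stated bound. The main obstacle I anticipate is calibrating the numerical constants inside the symmetrization: the intermediate threshold $(3/2)\alpha$ must sit strictly between $\alpha$ and $2\alpha$ so that both Chernoff steps succeed with constant probability, and $n\alpha$ must exceed an absolute constant before the symmetrization step is valid. A secondary subtlety is the cleanest route to $\vc(\C_\oplus) = O(\vc(\C))$; the squared-growth-function argument keeps the proof elementary, but its precise constant propagates into the logarithmic factors of the final bound.
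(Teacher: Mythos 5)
The paper does not prove Lemma~\ref{lem:rea_gen}; it imports it as a known result from \citet{vapnik1971uniform} and \citet{blumer1989learnability}, so there is no paper proof to compare against. Your reconstruction --- pass to the symmetric-difference class $\C_\oplus$, control $\vc(\C_\oplus)$ via the squared growth function and Sauer, then run the ghost-sample symmetrization and random-permutation argument --- is exactly the classical route, and the sketch is essentially sound.

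One point deserves more care than the phrase ``concentrates hypergeometrically'' suggests. After symmetrization you need, for a fixed labeling with $k$ total disagreements on the $2n$ points, a tail bound of $\exp(-\Omega(\alpha n))$ for the event that the $n$-vs-$n$ split is unbalanced at the $\alpha n$ scale. The plain Hoeffding/Azuma tail for the hypergeometric gives only $\exp(-\Omega(\alpha^2 n))$, which would lose a factor $1/\alpha$ in the sample complexity; you need a Bernstein- or Chernoff-type hypergeometric bound whose exponent is $\Omega(t^2/\max(k,t))$, exploiting that the relevant $k$ is $O(\alpha n)$ (and that larger $k$ only makes the required asymmetry rarer). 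This is precisely where the \emph{multiplicative} (relative-deviation) nature of the statement enters, and it is the source of the $1/\alpha$ rather than $1/\alpha^2$ rate. A second, smaller point: in the $E_2$ direction the ghost-sample witness has $\err_{\D_\X}(f)\le\alpha$, which may be much smaller than $\alpha$, so the clean ``probability $\ge 1/2$ by Chernoff'' step is better replaced by Markov's inequality (giving a constant $\ge 1/3$, which is all you need). You flag the constant-calibration issue yourself, so this is a refinement rather than a gap, but both points should be made explicit if you write the argument out.
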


We also have the following agnostic generalization bound~\citep{talagrand1994sharper}, which ensures that for every concept $c\in \C$, its generalization error and empirical error are close. Unlike the realizable generalization bound, the agnostic generalization bound does not require the error to be small. However, this relaxation increases the sample complexity by roughly a factor of $1/\alpha$.

\begin{lemma}[Agnostic Generalization Bound]
\label{lem:agn_gen}
    Let $\C$ be a concept class and $\D$ be a distribution over $\X\times\{0, 1\}$. Suppose $S=\{(x_1, y_1),\dots,(x_n,y_n)\}$, where each $(x_i, y_i)$ is drawn i.i.d. from $\D$. Then we have
    \begin{equation*}
        \Pr[\forall c\in\C,\lvert\err_S(c) - \err_\D(c)\rvert \le \alpha]\ge 1-\beta
    \end{equation*}
    given that 
    \begin{equation*}
        n\ge C\cdot \frac{\vc(\C) + \ln(1/\beta)}{\alpha^2}
    \end{equation*}
    for some universal constant $C$.
\end{lemma}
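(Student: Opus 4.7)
My approach is the classical uniform-convergence argument via ghost-sample symmetrization and Sauer's lemma, followed by a chaining refinement to obtain the sharp rate stated.

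First, I would introduce an independent ghost sample $S'$ of size $n$ drawn from $\D$ and apply the standard symmetrization step: assuming $n \gtrsim 1/\alpha^2$, a short Chebyshev-style argument shows that
\[
  \Pr\bigl[\sup_{c\in\C}|\err_S(c) - \err_\D(c)| > \alpha\bigr] \;\le\; 2\Pr\bigl[\sup_{c\in\C}|\err_S(c) - \err_{S'}(c)| > \alpha/2\bigr].
\]
This reduces a question about an arbitrary distribution to a combinatorial statement on the doubled sample $T = S \cup S'$.

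Second, conditioning on $T$ and applying uniform random swaps of $(x_i,y_i)$ with $(x_i',y_i')$, the quantity $\err_S(c) - \err_{S'}(c)$ becomes an average of independent zero-mean $\pm 1/n$ terms for each fixed $c$, so Hoeffding's inequality yields a tail of the form $2\exp(-\Omega(\alpha^2 n))$. Crucially, only the restriction of $\C$ to the $2n$ points in $T$ matters, which has cardinality at most $\Pi_\C(2n) \le (2en/\vc(\C))^{\vc(\C)}$ by Sauer's lemma (Lemma~\ref{lem:sauer}). A union bound and solving $\Pi_\C(2n)\exp(-\Omega(\alpha^2 n)) \le \beta$ for $n$ yields $n \gtrsim (\vc(\C)\ln(1/\alpha) + \ln(1/\beta))/\alpha^2$.

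The main obstacle is shaving the parasitic $\ln(1/\alpha)$ factor to match the bound as stated, which is the essence of Talagrand's 1994 refinement. Rather than a single union bound at scale $\alpha$, one performs a chaining argument over dyadic scales $\alpha, 2\alpha, 4\alpha, \ldots$, bounding the contribution at each scale using a restricted cover of $\C$ whose size is controlled by the growth function at the corresponding resolution; the realizable generalization bound (Lemma~\ref{lem:rea_gen}) can be used to pass between scales since pairs of concepts that disagree at a small empirical rate also disagree at a controlled population rate. Equivalently, one bounds the expected supremum of the empirical process by a Dudley-type entropy integral of order $\sqrt{\vc(\C)/n}$ and then concentrates around this expectation using Talagrand's inequality for bounded empirical processes; the VC-shattering structure collapses the resulting integral to a logarithm-free bound. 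This chaining step is the non-routine part, while the rest is bookkeeping that can be read off from the symmetrization and Hoeffding arguments above.
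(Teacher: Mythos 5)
The paper does not prove Lemma~\ref{lem:agn_gen}; it is cited directly from Talagrand (1994), so there is no internal proof to compare against. Your high-level roadmap is correct: symmetrization plus Sauer's lemma gives the bound with a parasitic $\ln(1/\alpha)$ factor, and eliminating that factor is exactly what the cited refinement achieves via chaining, so your plan reconstructs the right known argument.

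However, the chaining sketch is the crux and is described in a way that would not survive scrutiny. First, appealing to Lemma~\ref{lem:rea_gen} to ``pass between scales'' is circular: Lemma~\ref{lem:rea_gen} is itself a uniform-convergence consequence of the same machinery you are trying to build, and in any case the mechanism in chaining is nested $L_2(\mu_n)$-nets of the class, not a realizable generalization bound. Second, the ingredient that actually kills the logarithm is Haussler's packing bound for VC classes: the $\epsilon$-packing number of a class of VC dimension $d$ under the empirical $L_2$ metric is at most $(C/\epsilon)^{O(d)}$, which makes Dudley's entropy integral $\int_0^1\sqrt{\log N(\epsilon)}\,d\epsilon = O(\sqrt{d})$ with no residual log; the cruder count from Sauer's lemma alone does not achieve this and would leave the $\ln(1/\alpha)$ factor in place. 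Third, to go from the bound on $\mathbb{E}\sup_{c\in\C}|\err_S(c)-\err_\D(c)|$ to the stated high-probability statement you need a concentration inequality for suprema of bounded empirical processes (Talagrand's inequality, or McDiarmid at a constant-factor loss), which you mention but should make the explicit final step. With Haussler's packing bound and Dudley's integral in place of the appeal to Lemma~\ref{lem:rea_gen}, the argument is a correct reconstruction of the cited result.
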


In this work, we consider learning algorithms that preserve differential privacy. We say that two datasets $S_1$ and $S_2$ are neighboring if they differ by a single entry. A private algorithm is required to produce similar outputs for every pair of neighboring datasets. The similarity between the output distributions is quantified by two parameters $\varepsilon$ and $\delta$. We refer to the case when $\delta=0$ as \emph{pure} differential privacy, and when $\delta > 0$ we term it \emph{approximate} differential privacy.

\begin{definition}[Differential Privacy~\citep{dwork2006calibrating,dwork2006our}]
    A randomized algorithm $\A$ is said to be $(\varepsilon, \delta)$-differentially private if for any two neighboring  datasets $S_1$ and $S_2$ and any set $O$ of outputs, we have
    \begin{equation*}
        \Pr[\A(S_1)\in O]\le e^\varepsilon\Pr[\A(S_2)\in O] + \delta.
    \end{equation*}
    When $\delta = 0$, we may omit the parameter $\delta$ and simply say that $\A$ is $\varepsilon$-differentially private.
\end{definition}

In private learning, the learning algorithm produces a hypothesis that contains the prediction result for every $x\in\X$. However, in the private prediction problem introduced by~\citet{dwork2018privacy}, the algorithm $\A$ receives a dataset $S$ along with a query $x$ and outputs only the prediction on $x$. For privacy, we require it to satisfy differential privacy with respect to the dataset $S$. For utility, we treat $\A(S, \cdot)$ as a (randomized) classifier and require it to exhibit a low error in the PAC/agnostic setting. The formal definition is provided below.

\begin{definition}[Private Prediction~\cite{dwork2018privacy}]
    Let $\A$ be an algorithm that takes a labeled dataset $S$ and an unlabeled data point $x$ as input and produces a prediction value in $\{0, 1\}$. We say $\A$ is an $(\varepsilon,\delta)$-differentially private prediction algorithm if for any $x\in \X$, the output $A(S, x)$ is $(\varepsilon,\delta)$-differentially private with respect to $S$. 
    
    Define $\err_{\D}(\A(S, \cdot)) = \Pr_{(x,y)\sim\D, \A}[\A(S, x)\neq y]$. We say $\A$ $(\alpha, \beta)$-PAC learns $\C$ if for any distribution $\D$ such that $\Pr_{(x,y)\sim \D}[c(x) = y] = 1$ for some $c\in\C$, we have
    \begin{equation*}
        \Pr_{S\sim\D^n}[\err_{\D}(\A(S,\cdot)) \le \alpha]\ge 1-\beta.
    \end{equation*}
    Similarly, we say $\A$ $(\alpha, \beta)$-agnostically learns $\C$ if for any distribution $\D$, we have
    \begin{equation*}
        \Pr_{S\sim\D^n}[\err_{\D}(\A(S,\cdot)) \le \inf_{c\in\C}\err_{\D}(c)+\alpha]\ge 1-\beta.
    \end{equation*}
\end{definition}

We next describe the exponential mechanism~\citep{mcsherry2007mechanism} and its properties. Let $H$ be a finite set and $q:(\X\times\{0, 1\})^n\times H\to \mathbb{R}$ be a score function. We say $q$ has sensitivity $\Delta$ if $\max_{h\in H}\lvert q(S_1, h) - q(S_2, h) \rvert\le \Delta$ for any neighboring datasets $S_1$ and $S_2$ of size $n$. The exponential mechanism outputs an element $h\in H$ with probability
\begin{equation*}
    \frac{\exp(-\varepsilon\cdot q(S, h) / 2\Delta)}{\sum_{f\in H}\exp(-\varepsilon\cdot q(S, f) / 2\Delta)}.
\end{equation*}

\begin{lemma}[Properties of the Exponential Mechanism~\citep{mcsherry2007mechanism}]
\label{lem:prop_exp}
    The exponential mechanism is $\varepsilon$-differentially private. Moreover, with probability $1-\beta$, it outputs an $h$ such that
    \begin{equation*}
        q(S, h) \le \min_{f\in H}q(S, f)+\frac{2\Delta}{\varepsilon}\ln(\lvert H\rvert / \beta).
    \end{equation*}
\end{lemma}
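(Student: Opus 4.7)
The plan is to verify the two claims separately, both by direct calculation on the explicit output distribution of the exponential mechanism.

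For privacy, I would fix a pair of neighboring datasets $S_1, S_2$ and any output $h \in H$, and bound the ratio of the output probabilities. Writing $Z(S) = \sum_{f \in H}\exp(-\varepsilon\cdot q(S,f)/2\Delta)$, the ratio factors as
\begin{equation*}
\frac{\Pr[\A(S_1)=h]}{\Pr[\A(S_2)=h]} = \exp\!\left(\tfrac{\varepsilon}{2\Delta}(q(S_2,h)-q(S_1,h))\right)\cdot \frac{Z(S_2)}{Z(S_1)}.
\end{equation*}
The first factor is at most $e^{\varepsilon/2}$ by the sensitivity hypothesis $|q(S_1,h)-q(S_2,h)|\le\Delta$. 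For the second factor, I would apply the same sensitivity bound termwise: each summand in $Z(S_2)$ is at most $e^{\varepsilon/2}$ times the corresponding summand in $Z(S_1)$, so $Z(S_2)/Z(S_1)\le e^{\varepsilon/2}$. Multiplying gives $e^{\varepsilon}$, and since this holds pointwise for each $h$, it extends to any output set $O$, yielding $\varepsilon$-differential privacy (with $\delta=0$).

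For utility, let $q^\ast = \min_{f\in H}q(S,f)$ and set $t = \frac{2\Delta}{\varepsilon}\ln(|H|/\beta)$. I would bound the probability that the mechanism outputs an $h$ with $q(S,h) > q^\ast + t$ by splitting the numerator and denominator of the exponential distribution. The numerator, summing $\exp(-\varepsilon q(S,h)/2\Delta)$ only over such "bad" $h$, is at most $|H|\exp(-\varepsilon(q^\ast+t)/2\Delta)$. The denominator is at least a single term, namely the one attaining $q^\ast$, giving a lower bound $\exp(-\varepsilon q^\ast/2\Delta)$. Taking the ratio,
\begin{equation*}
\Pr[q(S,h)>q^\ast+t]\le |H|\exp(-\varepsilon t/2\Delta) = |H|\cdot \frac{\beta}{|H|} = \beta,
\end{equation*}
which is exactly the claimed tail bound.

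There is no real obstacle here; both claims are standard textbook manipulations of the closed-form distribution. The only minor point to be careful about is the sign convention: the score $q$ plays the role of a loss (the mechanism favors small $q$), so the privacy argument must apply the sensitivity bound in the direction that inflates the probabilities, and the utility argument lower-bounds the denominator by the minimizer rather than the maximizer. Once those conventions are fixed, the two computations above complete the proof.
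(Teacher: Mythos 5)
Your proof is correct, and it is the standard McSherry--Talwar argument; the paper simply cites this lemma without proof, so there is nothing to compare against. Both the privacy bound (splitting the ratio into a per-term factor and a normalizer factor, each at most $e^{\varepsilon/2}$ by the sensitivity hypothesis) and the utility bound (upper-bounding the bad mass by $|H|\exp(-\varepsilon(q^\ast+t)/2\Delta)$ and lower-bounding the normalizer by the single term at $q^\ast$) are the canonical calculations and check out.
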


An important property of differential privacy is that it implies generalization~\citep{dwork2015reusable,dwork2015preserving,bassily2016algorithmic,rogers2016max,feldman2017generalization,jung2020new}: a differentially private learning algorithm with a low empirical error also exhibits a low generalization error. The following version of the generalization property was used by~\citet{alon2020closure} in their transformation to handle improper learners. Note that this bound holds even for $\varepsilon > 1$.

\begin{lemma}[DP Generalization]
\label{lem:dp_gen}
    Let $\A$ be an $(\varepsilon,\delta)$-differentially private learning algorithm that takes $S_\X\in\X^n$ as input and outputs a predicate $h:\X\to \{0, 1\}$. Suppose each element in $S_\X$ is drawn i.i.d. from some distribution $\D_\X$, then we have
    \begin{equation*}
        \Pr\left[\mathbb{E}_{x\sim\D_\X}\left[h(x)\right] > e^{2\varepsilon}\left(\frac{\sum_{x\in S_\X}h(x)}{n}+\frac{10}{\varepsilon n}\log\left(\frac{1}{\varepsilon\delta n}\right)\right)\right] < O\left(\frac{\varepsilon\delta n}{\log\left(\frac{1}{\varepsilon\delta n}\right)}\right),
    \end{equation*}
    where the probability is taken over the random generation of $S_\X$ and the random coins of $\A$.
\end{lemma}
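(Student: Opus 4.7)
The plan is to prove this DP-generalization bound via the ``resampling $+$ monitor'' framework from the DP-generalization literature (\citet{bassily2016algorithmic, jung2020new}), which produces the $e^{2\varepsilon}$-multiplicative form. Denote $p(h) := \mathbb{E}_{x\sim\D_\X}[h(x)]$, $\hat p(h, S_\X) := \frac{1}{n}\sum_{x\in S_\X} h(x)$, and set $T := \frac{10}{\varepsilon n}\log(1/(\varepsilon\delta n))$.

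First I would derive an in-expectation generalization bound by resampling. For each $i\in[n]$, let $S^{(i,x')}$ denote the dataset obtained by replacing $x_i\in S_\X$ with an independent fresh draw $x'\sim\D_\X$. By exchangeability of the i.i.d.\ coordinates,
\begin{equation*}
\mathbb{E}_{S_\X,x',\A}\!\left[\A(S^{(i,x')})(x')\right] = \mathbb{E}_{S_\X,\A}\!\left[\A(S_\X)(x_i)\right].
\end{equation*}
Applying $(\varepsilon,\delta)$-DP of $\A$ to the single-element swap between $S_\X$ and $S^{(i,x')}$ (together with $\A(\cdot)(x')\in[0,1]$) gives $\mathbb{E}[p(\A(S_\X))] \le e^\varepsilon \mathbb{E}[\A(S_\X)(x_i)] + \delta$ for every $i$. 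Averaging over $i\in[n]$ --- which preserves the additive term at $\delta$ rather than inflating it to $n\delta$ --- yields the key in-expectation bound
\begin{equation*}
\mathbb{E}[p(\A(S_\X))] \le e^\varepsilon\mathbb{E}[\hat p(\A(S_\X),S_\X)]+\delta.
\end{equation*}

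Next I would convert this in-expectation bound into the claimed high-probability statement by a monitor argument. Let $\mathcal{E} := \{p(\A(S_\X)) > e^{2\varepsilon}(\hat p(\A(S_\X),S_\X)+T)\}$ and $\eta := \Pr[\mathcal{E}]$. Consider the derived randomized map $M(S_\X)$ that runs $\A(S_\X)$ to obtain $h$ and outputs $h$ if $\mathcal{E}$ holds, else the constant-zero predicate. Re-running the resampling derivation for $M$ in place of $\A$ --- each DP swap applied to the same joint $(S_\X,\A(S_\X))$ on which $\mathcal{E}$ is evaluated --- produces the same in-expectation inequality for $M$. Since on $\mathcal{E}$ one has $p(M(S_\X)) - e^\varepsilon \hat p(M(S_\X),S_\X) \ge (e^{2\varepsilon}-e^\varepsilon)\hat p(\A(S_\X),S_\X) + e^{2\varepsilon}T \ge e^{2\varepsilon}T$, and off $\mathcal{E}$ it is exactly $0$, we conclude $\eta\cdot e^{2\varepsilon}T \le \delta$, i.e.\ $\eta = O(\delta/T) = O(\varepsilon\delta n/\log(1/(\varepsilon\delta n)))$ as claimed.

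The main obstacle is verifying that the in-expectation inequality does transfer to the monitored map $M$: its censoring condition $\mathcal{E}$ depends on $S_\X$ (through $\hat p$) as well as on $\A(S_\X)$, so $M$ is not a pure post-processing of $\A(S_\X)$'s output and one cannot merely cite the post-processing lemma for DP. The resolution is to unfold the resampling derivation from step one: each single-element DP swap controls the joint distribution of $(\A(S_\X), S_\X)$ directly, which is exactly what $M$ reads. The remaining technical work is calibrating $T$ --- the specific $10\log(1/(\varepsilon\delta n))$ scaling is chosen so that $O(\delta/T)$ equals the stated failure probability $O(\varepsilon\delta n/\log(1/(\varepsilon\delta n)))$, and is the natural balance point for this argument.
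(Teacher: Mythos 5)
The paper does not prove this lemma itself; it is cited as a known result used by~\citet{alon2020closure}. So the comparison is between your attempt and the standard proofs in the DP-generalization literature, not a proof supplied in this paper.

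Your step 1 (the in-expectation bound via resampling and exchangeability) is correct and standard. The gap is in step 2. You correctly flag the danger --- that the censoring event $\mathcal{E}$ depends on $S_\X$ directly through $\hat p(\cdot,S_\X)$, so $M$ is not a post-processing of $\A(S_\X)$ --- but your proposed resolution does not actually close it. DP does not ``control the joint distribution of $(\A(S_\X),S_\X)$''; it controls $\Pr[\A(S_1)\in O]$ versus $\Pr[\A(S_2)\in O]$ for a \emph{fixed} output set $O$. When you rerun the resampling derivation on $M$, the DP step at a fixed $(S_\X,x')$ must be applied with the fixed set $O(S_\X,x')=\{h:h(x')=1,\ \mathcal{E}(S_\X,h)\}$ on both sides of the inequality, and after the exchangeability swap you obtain
\begin{equation*}
\mathbb{E}\bigl[p(M(S_\X))\bigr]\ \le\ e^{\varepsilon}\,\mathbb{E}\Bigl[\A(S_\X)(x_i)\cdot \mathbb{I}\bigl[\mathcal{E}\bigl(S_\X^{(i,x')},\A(S_\X)\bigr)\bigr]\Bigr]+\delta,
\end{equation*}
with the indicator reading $\mathcal{E}$ at the \emph{neighboring} dataset $S_\X^{(i,x')}$, not at $S_\X$. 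This is not $e^{\varepsilon}\mathbb{E}[\hat p(M(S_\X),S_\X)]+\delta$. Worse, the discrepancy goes in the unhelpful direction: whenever $\A(S_\X)(x_i)=1$ one has $\hat p(\A(S_\X),S_\X^{(i,x')})\le \hat p(\A(S_\X),S_\X)$, so $\mathcal{E}(S_\X^{(i,x')},\cdot)\supseteq\mathcal{E}(S_\X,\cdot)$ and the RHS you actually control is strictly larger than the one you need. The excess $\mathbb{E}\bigl[\A(S_\X)(x_i)\bigl(\mathbb{I}[\mathcal{E}(S_\X^{(i,x')},\A(S_\X))]-\mathbb{I}[\mathcal{E}(S_\X,\A(S_\X))]\bigr)\bigr]$ is a boundary event of width $1/n$ in the random quantity $\hat p(\A(S_\X),S_\X)$, and neither DP nor Chernoff controls such a boundary event without additional structure. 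Consequently the key inequality $\eta\,e^{2\varepsilon}T\le\delta$ does not follow.

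The standard route from the in-expectation bound to the high-probability form in the lemma is a $k$-copy monitor (\citet{bassily2016algorithmic,feldman2017generalization,jung2020new}), and this is what the Alon--Beimel--Moran--Stemmer analysis rests on: run $\A$ on $k\approx 1/\varepsilon$ independent blocks $S_1,\dots,S_k$, observe that the map $(S_1,\dots,S_k)\mapsto(h_1,\dots,h_k)$ is still $(\varepsilon,\delta)$-DP, select an index $j^{\ast}$ by a rule that is either a pure post-processing of the outputs or itself privatized, and apply the resampling/in-expectation inequality to the monitor $(j^{\ast},h_{j^{\ast}})$ restricted to the block $S_{j^{\ast}}$ using the indicator $\mathbb{I}[j^{\ast}=l]$ (a fixed set of outputs, so the DP swap is legal). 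That restriction is exactly what your censoring construction lacks, and it is where the machinery has to go. As written, the proof has a genuine gap at this step.
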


A learner that outputs a hypothesis with low empirical error is called an empirical learner~\citep{bun2015differentially}, which can be constructed from a PAC learner while preserving privacy.

\begin{definition}[PAC Empirical Learner]
    An algorithm $\A$ is said to be an $(\alpha, \beta)$-PAC empirical learner for concept class $\C$ with sample complexity $m$ if for any $c\in \C$ and any dataset $S=\{(x_1,c(x_1)),\allowbreak\dots,(x_m,c(x_m))\}$, it takes $S$ as input and outputs a hypothesis $h$ such that
    \begin{equation*}
        \Pr[\err_{S}(h)\le \alpha]\ge 1-\beta.
    \end{equation*}
\end{definition}

\begin{lemma}[Private Empirical Learner]
\label{lem:priv_emp}
    Let $\varepsilon \le 1$. Suppose $\A$ be an $(\varepsilon,\delta)$-differentially private $(\alpha, \beta)$-PAC learner for $\C$ with sample complexity $m$. Then there exists an $(1,O(\delta/\varepsilon))$-differentially private $(\alpha, \beta)$-PAC empirical learner $\A'$ for $\C$ with sample complexity $O(\varepsilon m)$. Moreover, if $\A$ is proper, then $\A'$ is also proper.
\end{lemma}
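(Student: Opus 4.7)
The plan is to construct $\A'$ by simulating a size-$m$ realizable i.i.d.\ dataset from the empirical distribution on its input and invoking $\A$ on the simulation. Fix an absolute constant $c$ to be chosen later and set $m' = c\varepsilon m$. Given input $\hat S \in (\X \times \{0,1\})^{m'}$, whose points are all labeled by some $c_0 \in \C$, $\A'$ forms $T = (t_1,\ldots,t_m)$ by independently drawing each $t_j$ uniformly at random from $\hat S$, and then outputs $\A(T)$. If $\A$ is proper then so is $\A'$, since $\A'$ simply returns $\A$'s output.

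Utility is essentially immediate. The sample $T$ consists of $m$ i.i.d.\ draws from the empirical distribution $\hat\D_{\hat S}$ on $\hat S$, which is realizable by $c_0 \in \C$. Hence with probability at least $1-\beta$, the PAC guarantee of $\A$ yields $h$ with $\err_{\hat\D_{\hat S}}(h) \le \alpha$, which equals $\err_{\hat S}(h)$ by definition of the empirical error.

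For privacy, I would invoke amplification via a coupled sampling argument. Fix neighboring $\hat S, \hat S'$ differing at some position $i$, and couple the two runs of $\A'$ through a shared choice of sampling indices $\pi_1, \ldots, \pi_m \in [m']$. Let $K = |\{j : \pi_j = i\}| \sim \mathrm{Binomial}(m, 1/m')$; conditional on $K = k$, the two internal datasets fed to $\A$ differ in exactly $k$ positions, so group privacy for the $(\varepsilon,\delta)$-DP algorithm $\A$ yields $(k\varepsilon,\, k e^{(k-1)\varepsilon}\delta)$-indistinguishability of the two outputs. Choosing a truncation level $K^\star = \Theta(1/\varepsilon)$ well above $\mathbb{E}[K] = 1/(c\varepsilon)$ (with $c$ a sufficiently large constant) makes $K^\star\varepsilon \le 1$ and $K^\star e^{K^\star\varepsilon}\delta = O(\delta/\varepsilon)$, while a Chernoff bound makes $\Pr[K > K^\star]$ exponentially small in $1/\varepsilon$. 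Combining these yields the claimed $(1, O(\delta/\varepsilon))$-DP guarantee.

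The main obstacle is the bookkeeping in the privacy analysis: the threshold $K^\star$ must simultaneously be small enough that $K^\star \varepsilon \le 1$ and $K^\star e^{K^\star \varepsilon}\delta = O(\delta/\varepsilon)$, yet large enough relative to $\mathbb{E}[K]$ that $\Pr[K > K^\star]$ is absorbed into $O(\delta/\varepsilon)$. Choosing $c$ as a large enough absolute constant together with a Chernoff tail bound on $K$ should let all three constraints be met simultaneously; the utility side and the construction itself are by contrast essentially direct.
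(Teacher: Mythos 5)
Your construction and utility argument coincide with the paper's: form a size-$m$ dataset by sampling with replacement from the size-$O(\varepsilon m)$ input, run $\A$ on it, and observe that the draws are i.i.d.\ from the (realizable) empirical distribution so the PAC guarantee transfers to empirical error. However, your privacy argument has a genuine gap. Conditioning on $K$ (the number of times the differing index is drawn), applying group privacy, and then truncating at a threshold $K^\star = \Theta(1/\varepsilon)$ produces a bound of the form
\begin{equation*}
\Pr[\A'(\hat S)\in O] \;\le\; e^{K^\star\varepsilon}\Pr[\A'(\hat S')\in O] \;+\; K^\star e^{(K^\star-1)\varepsilon}\delta \;+\; \Pr[K>K^\star],
\end{equation*}
and the last term is the problem. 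Since $K\sim\mathrm{Binomial}(m,1/(c\varepsilon m))$ has mean $1/(c\varepsilon)$, a Chernoff bound at $K^\star=\Theta(1/\varepsilon)$ only gives $\Pr[K>K^\star]=e^{-\Omega(1/\varepsilon)}$. That is \emph{not} $O(\delta/\varepsilon)$: for instance if $\delta=0$ (pure $\varepsilon$-DP input) your bound yields $(1,\,e^{-\Omega(1/\varepsilon)})$-DP rather than the $(1,0)$-DP promised by the lemma, and more generally the tail dominates whenever $\delta \ll \varepsilon e^{-\Omega(1/\varepsilon)}$. Dropping the truncation does not help either: one would need $\sum_k\Pr[K=k]\,e^{k\varepsilon}\,a_k \le O(1)\sum_k\Pr[K=k]\,a_k$ for arbitrary $a_k\in[0,1]$, which fails (e.g.\ $a_k=\I[k=m]$).

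The paper sidesteps this entirely by invoking the subsampling-amplification lemma of \citet{bun2015differentially} (Lemma~\ref{lem:amplification}) as a black box. That result states that for $\varepsilon\le 1$ and $6\varepsilon m/n\le 1$, the with-replacement subsampler is $(6\varepsilon m/n,\;\exp(6\varepsilon m/n)\frac{4m}{n}\delta)$-DP, with no extraneous tail term; taking $n=\lceil 6\varepsilon m\rceil$ gives $(1,O(\delta/\varepsilon))$-DP directly. The proof of that lemma requires a more refined coupling between the two subsampled distributions than crude group privacy plus a tail bound, so to repair your argument you would need to reproduce that finer analysis (or simply cite the lemma, as the paper does).
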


To prove the above lemma, we need the following result in~\citep{bun2015differentially}. We remark that although their original statement requires $n\ge 2m$, their proof actually works under the stronger conditions we present below.

\begin{lemma}
\label{lem:amplification}
    Let $\varepsilon\le 1$. Suppose $\A$ is an $(\varepsilon, \delta)$-differentially private algorithm that takes a dataset of size $m$ as input. For any $n$ such that $n\ge 2$ and $6\varepsilon m / n \le 1$, consider an algorithm $\A'$ works as follows:
    \begin{enumerate}
        \item Takes as input a dataset $S$ of size $n$.
        \item Constructs a dataset $T$ of size $m$, where each data point is sampled independently and uniformly from $S$ with replacement.
        \item Runs $\A$ on the dataset constructed in the previous step.
    \end{enumerate}
    Then $\A'$ is $(\varepsilon',\delta')$-differentially private for $\varepsilon' = 6\varepsilon m / n$ and $\delta'=\exp(6\varepsilon m /n)\frac{4m}{n}\delta$.
\end{lemma}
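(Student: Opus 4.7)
The plan is to prove this privacy-amplification-by-subsampling lemma via a coupling of the two subsampling processes combined with group privacy, following the strategy of~\citet{bun2015differentially}.

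First I fix neighbors $S_1, S_2 \in (\X\times\{0,1\})^n$ differing only at a single position $j^\star$ and couple the two with-replacement samplings by shared random indices. Concretely, draw $I_1, \ldots, I_m$ iid uniformly from $[n]$ and set $T_b = (S_b[I_1], \ldots, S_b[I_m])$ for $b \in \{1, 2\}$. Each $T_b$ has the correct marginal distribution of a with-replacement sample from $S_b$, and $K = |\{i : I_i = j^\star\}|$ is distributed as $\mathrm{Bin}(m, 1/n)$. Under this coupling $T_1$ and $T_2$ agree on every coordinate except the $K$ positions where the shared index equals $j^\star$, so conditional on $K=k$ the pair $(T_1, T_2)$ is $k$-neighboring.

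Next I condition on $K$ and invoke group privacy: iterating the $(\varepsilon,\delta)$-DP guarantee of $\A$ shows that $\A$ is $(k\varepsilon, \tfrac{e^{k\varepsilon}-1}{e^\varepsilon-1}\delta)$-DP on $k$-neighbors. Applied conditional on $K=k$ this gives, for any output set $O$,
\begin{equation*}
\Pr[\A(T_1) \in O \mid K=k] \le e^{k\varepsilon}\,\Pr[\A(T_2)\in O\mid K=k] + \tfrac{e^{k\varepsilon}-1}{e^\varepsilon-1}\,\delta.
\end{equation*}
Averaging over $K$ and truncating at the threshold $K^\star = \lceil 6m/n\rceil$, the sum over $k\le K^\star$ is bounded by $e^{\varepsilon'}\Pr[\A(T_2)\in O]$ using $e^{k\varepsilon}\le e^{K^\star\varepsilon} = e^{\varepsilon'}$, while the sum over $k>K^\star$ is bounded by $\Pr[K>K^\star]$ via the trivial estimate $\Pr[\A(T_1)\in O\mid K=k]\le 1$. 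The additive $\delta$-contribution is $\mathbb{E}\left[\tfrac{e^{K\varepsilon}-1}{e^\varepsilon-1}\right]\delta$, which I control through the moment generating function $\mathbb{E}[e^{K\varepsilon}] = (1+(e^\varepsilon-1)/n)^m \le e^{2\varepsilon m/n}$ (valid for $\varepsilon\le 1$).

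The main obstacle will be absorbing the tail probability $\Pr[K>K^\star]$, which carries no $\delta$-factor, into the target $\delta' = e^{6\varepsilon m/n}\cdot 4m\delta/n$. Under the hypothesis $6\varepsilon m/n \le 1$ one has $K^\star \ge 6\cdot\mathbb{E}[K]$, so a Chernoff bound on $\mathrm{Bin}(m, 1/n)$ makes $\Pr[K>K^\star]$ exponentially small in $m/n$; squeezing it into the $\delta'$ budget is what forces the specific constants $6$ and $4$ in the statement and is the main bookkeeping step of the argument. Once that is in place, summing the geometric series $\sum_{k\le K^\star}e^{k\varepsilon}$ and passing from the coupled joint probability back to the marginal are routine.
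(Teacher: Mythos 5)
Your framework --- coupling the two samplings through shared random indices, conditioning on the collision count $K$, and applying group privacy to $k$-neighboring realizations --- is a reasonable starting point, but the truncation step is a genuine gap. Truncating at $K^\star$ leaves behind the additive residue $\Pr[K > K^\star]$, which carries \emph{no} factor of $\delta$. The target $\delta' = e^{6\varepsilon m/n}\frac{4m}{n}\delta$ vanishes as $\delta\to 0$, so the lemma in particular asserts pure DP for $\A'$ whenever $\A$ is pure DP; but for any threshold $K^\star < m$ one has $\Pr[K>K^\star]>0$, a $\delta$-free positive residue that cannot be squeezed into $\delta'$. A Chernoff bound cannot rescue this: its decay is in $m/n$, which the hypotheses do not force to be large (indeed $m/n$ may be a constant like $1/6$). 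There is also a secondary overshoot: $\lceil 6m/n\rceil\varepsilon$ can exceed $6\varepsilon m/n$ by up to $\varepsilon$, so the multiplicative factor $e^{K^\star\varepsilon}$ need not be bounded by $e^{\varepsilon'}$.

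The argument in~\citet{bun2015differentially}, which the paper cites rather than reproves, avoids truncation by comparing both coupled samples to a common baseline. Conditional on $K=k$, resample the $k$ occurrences of the differing index $j^\star$ uniformly from $[n]\setminus\{j^\star\}$; the result is distributed as $T_b\mid K=0$, where $T_1$ and $T_2$ coincide, and it differs from the original tuple in $k$ positions. Writing $p(0)=\Pr[\A(T_b)\in O\mid K=0]$, group privacy then gives $\Pr[\A(T_1)\in O\mid K=k]\le e^{k\varepsilon}p(0)+\tfrac{e^{k\varepsilon}-1}{e^\varepsilon-1}\delta$ and, rearranged, $\Pr[\A(T_2)\in O\mid K=k]\ge e^{-k\varepsilon}p(0) - e^{-k\varepsilon}\tfrac{e^{k\varepsilon}-1}{e^\varepsilon-1}\delta$. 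Averaging both over $K\sim\mathrm{Bin}(m,1/n)$ and eliminating $p(0)$ yields a multiplicative factor of $\mathbb{E}[e^{K\varepsilon}]\big/\mathbb{E}[e^{-K\varepsilon}]=\bigl(\tfrac{n+e^\varepsilon-1}{n-1+e^{-\varepsilon}}\bigr)^m\le e^{6\varepsilon m/n}$ for $\varepsilon\le 1$ and $n\ge 2$, and a $\delta$-coefficient of order $\tfrac{m}{n}\,e^{O(\varepsilon m/n)}$. The entire $\varepsilon$-dependence of $\varepsilon'$ comes from the two moment generating functions, not from a truncation threshold, and there is no binomial tail left to absorb.
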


\begin{proof}[Proof of Lemma~\ref{lem:priv_emp}]
    Let $\A$ be an $(\varepsilon,\delta)$-differentially private $(\alpha, \beta)$-PAC learner for $\C$ with sample complexity $m$. Construct an algorithm $\A'$ as in Lemma~\ref{lem:amplification} with $n = \lceil6\varepsilon m\rceil$. Then Lemma~\ref{lem:amplification} directly implies that $\A'$ is $(1, O(\delta / \varepsilon))$-differentially private.

    Let $\D$ be the empirical distribution over $S$. Since $\A$ is an $(\alpha, \beta)$-PAC learner for $\C$, we have $\err_{\D}(\A(T)) \le \alpha$ with probability $1-\beta$ over the random generalization of $T$ and the internal randomness of $\A$. This is equivalent to $\Pr[\err_{S}(\A'(S))\le \alpha] \ge 1-\beta$, where the probability is taken over the internal randomness of $\A'$. Thus, $\A'$ is an $(\alpha, \beta)$-PAC empirical learner for $\C$.

    Moreover, since $\A'$ runs $A$ on some dataset $T$, $\A$ is proper implies that $\A'$ is proper.
\end{proof}
\section{The Transformation}

In this section, we present our realizable-to-agnostic transformation. We will start by describing a relabeling procedure proposed by~\citet{beimel2014learning}, which serves as the key component of our transformation.

Let $\C$ be a concept class and $S\in(\X\times\{0, 1\})^n$ be a dataset. In the agnostic setting, $S$ may not be consistent with any $c\in \C$. The idea of~\citet{beimel2014learning} is to first relabel $S$ by some concept $h\in \C$. After that, realizable learning algorithms can be applied.

Their method first constructs a candidate set $H$ such that for every labeling of $S$ there is one concept in $H$ consistent with that labeling. Then it initiates the exponential mechanism with score function $q(S, h) = \err_S(h)$ to select a concept $h$ for relabeling. Though the selection of $h$ is not private since the construction of $H$ depends on $S$, they proved that running a private algorithm on the relabeled dataset $S^h$ still preserves differential privacy. Moreover, the agnostic generalization bound and the property of the exponential mechanism ensure that $\err_{\D}(h)$ is close to the optimal error achieved by concepts in $\C$. Thus, if we can find some hypothesis $g$ such that $\err_{\D}(g)\approx \err_{\D}(h)$ by running a realizable learner on $S^h$, the resulting algorithm is an agnostic learner as desired.

However, the data points in $S^h$ are no longer i.i.d. from some distribution because the selection of $h$ depends on $S$. Therefore, directly running a private PAC learner on the relabeled dataset $S^h$ might not produce a good hypothesis. One should instead convert it to an empirical learner and apply the empirical learner to obtain some hypothesis $g$ whose empirical error is small on $S^h$. When the learner is proper (i.e., $g\in\C$), the realizable generalization bound implies that $\dis_{\D_\X}(g, h)$ is small, which indicates $\err_{\D}(g)\approx \err_{\D}(h)$. However, when the given private PAC learner is improper (and so is the resulting empirical learner), the realizable generalization bound cannot provide any guarantee on $\dis_{\D_\X}(g, h)$. 

To deal with improper learners, we next discuss a technique due to~\citet{alon2020closure}. In their work, they split the input dataset $S$ into two parts $S=U{\circ}V$ and showed that a simple variant of~\citet{beimel2014learning}'s algorithm, which outputs $V^h$ as well, still preserves privacy with respect to $U$ (i.e., the $V$ portion is regarded as public). They then considered an auxiliary algorithm that outputs $g\oplus \bar{h}$ for some $\bar{h}\in\C$ consistent with $V^h$ and used the generalization property of DP to derive an upper bound on $\dis_{\D_\X}(g, \bar{h})$. Since $\dis_{\D_\X}(h, \bar{h})$ can be controlled by applying the realizable generalization bound to $V_{\X}$, the triangle inequality yields a bound on $\dis_{\D_\X}(g, h)$. Therefore, the generalization error of $g$ can be successfully bounded.

The above transformation incurs an extra sample complexity of $\widetilde{O}(\vc(\C)/\alpha^2)$ when converting a private PAC learner to a private agnostic learner~\citep{beimel2014learning,alon2020closure}. However, it only provides a constant level of privacy (i.e., $\varepsilon=\Theta(1)$) even if the PAC learner $\A$ is $(\varepsilon,\delta)$-differentially private for some $\varepsilon\ll 1$. To achieve an arbitrary privacy level $\varepsilon$, one has to apply the privacy-amplification-by-subsampling technique~\citep{kasiviswanathan2011can}: first subsample a dataset $T$ from $S$ with size $\lvert T\rvert\approx\varepsilon \lvert S\rvert =\varepsilon n$, then perform the transformation on $T$ only. Since the size of $T$ should be at least $\widetilde{O}(\vc(\C)/\alpha^2)$ to ensure the agnostic generalization of every $h\in H\subseteq \C$, the overall transformation results in an extra sample complexity of $\widetilde{O}(\vc(\C)/\alpha^2\varepsilon)$.

We now illustrate how to eliminate the $1/\varepsilon$ factor. Let $W$ denote the dataset containing the data points that are not in $T$, i.e., $S=T{\circ}W$. In the above process, we discard $W$ after sampling and do not exploit any information contained in $W$, which seems too wasteful. Our idea is to utilize $W$ so that we can apply the agnostic generalization bound to the entire dataset rather than $T$ only. To be specific, we still construct $H$ from the subsampled dataset $T$, but evaluate the score function of every $h\in H$ over $S$. Thus, we only require $\lvert S\rvert \ge\widetilde{O}(\vc(\C)/\alpha^2)$ to ensure agnostic generalization.

The primary obstacle here is how to ensure that such a modification still preserves privacy. Let $S_1=T_1{\circ}W_1$ and $S_2=T_2{\circ}W_2$ be two neighboring datasets. There are two cases: $T_1=T_2$ and $T_1\neq T_2$. In the case where $T_1 =T_2$, we will construct the same candidate set $H$ from them. Therefore, it is easy to achieve privacy for the selection of $h$ (the hypothesis for relabeling the dataset) by launching an $\varepsilon$-differentially private exponential mechanism since $W_1$ and $W_2$ are neighboring datasets. According to the post-processing property of DP, running any algorithm on the relabeled dataset is also $\varepsilon$-differentially private.

However, it is not as simple in the case where $T_1\neq T_2$, as the candidate sets constructed from $T_1$ and $T_2$ are different. To see why, let $\hat{T}$ be the overlapping portion of $T_1$ and $T_2$, which has size $\lvert T\rvert - 1$. The privacy analysis proposed by~\citet{beimel2014learning} requires $\lvert q(S_1, h_1) - q(S_2,h_2)\rvert$ to be small for any $h_1$ and $h_2$ that agree on $\hat{T}$. This naturally holds in the original transformation, where the score function is $q(S, h) = \err_{T}(h)$ (recall that we discard all data points in $W$) and the difference $\lvert q(S_1, h_1) - q(S_2,h_2)\rvert$ is only $1/\lvert T \rvert$ since $h_1$ and $h_2$ agree on $\hat{T}$. However, if we try to incorporate the $W$ portion and set the score function to be $q(S, h) = \err_S(h)$, the difference can be close to $1$ as $h_1$ and $h_2$ may totally disagree on $W_1=W_2$, failing to provide a satisfactory privacy guarantee.

We overcome this issue by devising a score function that estimates the generalization error of $h$ using the entire dataset while having a small ``sensitivity''. In particular, we run the exponential mechanism with the following score function:

\begin{equation*}
    q(T{\circ}W, h) = \min_{f\in \C}\{\dis_{T_\X}(h, f) + \err_{W}(f)\}.
\end{equation*}

The above score function can be interpreted as searching for a concept $f$ that is close to $h$ over $T_\X$ and also has a low empirical error on $W$. We describe the relabeling procedure in Algorithm~\ref{alg:relabel}, where we set the privacy parameter as $\varepsilon$ and sensitivity parameter as $1/\lvert W\rvert$ to ensure that it preserves $\varepsilon$-differential privacy when $T_1=T_2$. In the case that $T_1\neq T_2$, one can verify that $\lvert q(S_1, h_1) - q(S_2, h_2)\rvert\le 1/ \lvert T\rvert= \Theta(1 / \varepsilon n)$. Because we have set the privacy parameter as $\varepsilon$ and sensitivity parameter $\Delta = \Theta(1/n)$, we can apply the analysis of~\citet{beimel2014learning} to show that running a private algorithm on the relabeled dataset is still private with a constant privacy parameter. Note that this case only happens with probability $\varepsilon$. We can apply the privacy-amplification-by-subsampling argument to deduce (actually, the formal proof requires a more delicate privacy analysis for this case) that the overall algorithm is private with privacy parameter $\varepsilon$. We formally describe the details of the entire agnostic learning algorithm in Algorithm~\ref{alg:agnostic} and state its privacy guarantee in the following lemma.

\begin{lemma}[Privacy of $\Agn$]
\label{lem:priv_agn}
    Suppose $\A$ is $(1, \delta)$-differentially private. Then $\Agn$ (Algorithm~\ref{alg:agnostic}) is $(O(\varepsilon),O(\varepsilon\delta))$-differentially private.
\end{lemma}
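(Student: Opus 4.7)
The plan is to decompose the privacy analysis by conditioning on where the single differing entry between neighboring $S_1, S_2$ lives, handle each case separately, and combine the conditional guarantees through a subsampling-amplification-style mixture. Write $S_j = T_j \circ W_j$ for the subsampled portion and its complement produced inside $\Agn$, and let $i^*$ denote the index at which $S_1, S_2$ differ.

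\textbf{Case A: $i^*$ lies in $W$, so $T_1 = T_2$.} Both runs construct the same candidate set $H$ and the same relabeled subsample $T^h$ for every fixed $h$. The $W$ side enters only through the $\err_W(f)$ term of
\[
    q(T{\circ}W, h) = \min_{f\in \C}\{\dis_{T_\X}(h, f) + \err_{W}(f)\},
\]
which has sensitivity $1/|W|$ in $W$. The exponential mechanism with sensitivity $\Delta = 1/|W|$ and privacy parameter $\varepsilon$ then selects $h$ in an $\varepsilon$-DP fashion, and since all downstream steps depend on $W$ only through $h$, post-processing delivers an $(\varepsilon, 0)$ bound on this event.

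\textbf{Case B: $i^*$ lies in $T$, so $T_1 \neq T_2$ and $W_1 = W_2$.} Let $\hat T$ be the common portion of $T_1, T_2$, of size $|T| - 1$. Following the Beimel et al.\ recipe, I would couple each $h_1 \in H_1$ with some $h_2 \in H_2$ agreeing with $h_1$ on $\hat T$, using the fact that each candidate set realizes every labeling of $\hat T$. For such a coupled pair and any $f \in \C$, the $T$-disagreement terms differ by at most $2/|T|$ (because $h_1, h_2$ agree on $\hat T$) while $\err_{W_1}(f) = \err_{W_2}(f)$, so the minimum satisfies $|q(S_1, h_1) - q(S_2, h_2)| \le 2/|T| = \Theta(1/(\varepsilon n))$. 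Combined with $\Delta = \Theta(1/n)$ and privacy parameter $\varepsilon$, this makes the exponential-mechanism score-ratio between coupled pairs $e^{O(1)}$. Chaining with the $(1,\delta)$-DP of $\A$ on $T_1^{h_1}$ and $T_2^{h_2}$ (which are neighboring since $h_1, h_2$ agree on $\hat T$) yields an $(O(1), O(\delta))$ bound on this event.

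\textbf{Combining the two cases.} Case B occurs only with probability $p = \Theta(\varepsilon)$ over the subsampling randomness. Writing
\[
    \Pr[\Agn(S_1) \in O] = (1-p)\Pr[\Agn(S_1) \in O \mid \text{Case A}] + p\Pr[\Agn(S_1) \in O \mid \text{Case B}]
\]
and running a subsampling-style mixture calculation in the spirit of Lemma~\ref{lem:amplification}, the Case A contribution is already $\varepsilon$-close without amplification, while the $O(1)$ privacy loss and $O(\delta)$ slack from Case B are each attenuated by the factor $p = \Theta(\varepsilon)$, yielding the claimed $(O(\varepsilon), O(\varepsilon\delta))$-DP.

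I expect the combination step to be the main obstacle. Textbook amplification-by-subsampling relies on the output being \emph{identical} outside the sampled set, whereas here Case A produces only $\varepsilon$-closeness; reconciling this with Case B's $O(1)$-closeness inside the sampled set requires verifying directly that the overall mixture still achieves the target $(O(\varepsilon), O(\varepsilon\delta))$ bound rather than degrading to the weak $O(1)$ loss from Case B. Together with the fact that $H_1$ and $H_2$ in Case B may have different sizes and thus require an explicit coupling before the exponential-mechanism ratio can even be invoked, this is the delicate privacy analysis alluded to in the surrounding discussion.
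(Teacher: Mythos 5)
Your high-level decomposition---conditioning on whether the differing index lands in $T$ or $W$, handling each case, then combining---matches the paper's, and your Case~A analysis (the score $q$ has $W$-sensitivity $1/\lvert W\rvert$, so the exponential mechanism with $\Delta = 1/\lvert W\rvert$ is $\varepsilon$-DP, and post-processing closes the case) is essentially identical to the paper's. The gap is in Case~B and in the combination. You compare $\Agn(S_1)$ and $\Agn(S_2)$ conditioned on the \emph{same} index set $I$ (hence $W_1 = W_2$), which gives a conditional $(O(1), O(\delta))$ bound. But once you write $\Pr[\Agn(S_1)\in O] = (1-p)\Pr[\cdot\mid A] + p\Pr[\cdot\mid B]$, you would need to control $\Pr[\Agn(S_2)\in O\mid B]$ in terms of $\Pr[\Agn(S_2)\in O]$, and nothing in your analysis gives that. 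From the ingredients you list, the best you get is $\Pr[\Agn(S_1)\in O] \le e^{O(1)}\Pr[\Agn(S_2)\in O] + O(\varepsilon\delta)$: the $O(\delta)$ additive slack is indeed attenuated by $p=\Theta(\varepsilon)$, but the $O(1)$ multiplicative loss is not. That is not a loose end one can close by ``verifying the mixture directly''; it is the heart of the lemma.

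The missing idea is an index shift inside Case~B. Writing $p_t(I)$ for the conditional probability given sampled index set $I$, the paper bounds, for each $I$ with $1\in I$ and each $i\notin I$, the quantity $p_1(I)$ against $p_2\bigl((I\setminus\{1\})\cup\{i\}\bigr)$---that is, against a run of $\Agn(S_2)$ whose sampled index set \emph{excludes} position $1$. Averaging over $i$ then turns $\sum_{I:\,1\in I} p_1(I)$ into $\frac{\lvert I\rvert}{n-\lvert I\rvert}\cdot e^{O(1)}\sum_{J:\,1\notin J} p_2(J)$ plus $O(\delta)\binom{n-1}{\lvert I\rvert-1}$, and the ratio $\lvert I\rvert/(n-\lvert I\rvert) = O(\varepsilon)$ is exactly where the $\varepsilon$ factor comes from. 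Under this shift your bookkeeping has to change: $W_1$ and $W_2'$ are now neighboring rather than equal, so the score difference picks up both a $1/\lvert T\rvert$ term (from $h_1,h_2'$ agreeing on $\hat{T}$) and a $1/\lvert W\rvert$ term, and one checks that $\varepsilon\lvert W\rvert\bigl(1/\lvert T\rvert + 1/\lvert W\rvert\bigr)/2 = O(1)$. The coupling issue you flag (candidate sets $H_1$, $H_2'$ of possibly different sizes) is handled exactly as you suspect, by grouping hypotheses according to their labeling of $\hat{T}$ and using $1\le \lvert P_1(c)\rvert,\lvert P_2'(c)\rvert\le 2$; but that piece does not rescue the combination without the index shift.
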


\begin{proof}
    Let $S_1,S_2$ be two neighboring datasets and $O$ be any set of outputs. Without loss of generality, we assume $S_1$ and $S_2$ differ on the first element. That is, $S_1=\{(x_1,y_1),(x_2,y_2),\dots,(x_n,y_n)\}$ and $S_2=\{(x'_1,y'_1),(x_2,y_2),\dots,\allowbreak(x_n,y_n)\}$. Define
    \begin{equation*}
        p_t(I) = \Pr[\Agn(S_t)\in O\mid \text{the sampled index set is }I]
    \end{equation*}
    for $t\in\{1, 2\}$ and $I\subseteq [n]$ of size $\lceil \varepsilon n\rceil$. Since $I$ is sampled uniformly at random, we have 
    \begin{equation*}
        \Pr[\Agn(S_t)\in O] = \frac{1}{\binom{n}{\lvert I\rvert}}\sum_{I}p_t(I).
    \end{equation*}

    Now consider a fixed index set $I$. Let $T_1,W_1$ and $T_2,W_2$ be the corresponding partitions of $S_1$ and $S_2$. We will consider two cases: $1\in I$ and $1\notin I$.

    When $1\notin I$, we have $T_1=T_2$. Therefore, $\Rel(T_1,W_1)$ and $\Rel(T_2,W_2)$ will construct the same candidate set $H$. For every $h\in H$, suppose its score function is minimized by $f_1$ on dataset $T_1{\circ}W_1$, i.e., $q(T_1{\circ}W_1, h) = \dis_{(T_1)_\X}(h, f_1) + \err_{W_1}(f_1)$. Since $W_1$ and $W_2$ are neighboring datasets, we can bound $q(T_2{\circ}W_2,h)$ as follows:
    \begin{align*}
        q(T_2{\circ}W_2, h) &= \min_{f\in \C}\left\{ \dis_{(T_2)_\X}(h, f) + \err_{W_2}(f) \right\}\\&\le \dis_{(T_2)_\X}(h, f_1) + \err_{W_2}(f_1) \\&\le \dis_{(T_1)_\X}(h, f_1) + \err_{W_1}(f_1) + \frac{1}{ \lvert W_1\rvert} \\&= q(T_1{\circ}W_1, h) + \frac{1}{ \lvert W_1\rvert}.
    \end{align*}
    By symmetry, $q(T_2{\circ}W_2, h) \le q(T_1{\circ}W_1, h) + \frac{1}{\lvert W_1\rvert}$. Therefore, the sensitivity of $q$ is $\frac{1}{\lvert W_1\rvert}$. It then follows by Lemma~\ref{lem:prop_exp} that 
    \begin{equation*}
        \Pr[\Rel(T_1,W_1) = T_1^h] \le e^\varepsilon\Pr[\Rel(T_2,W_2) = T_2^h]
    \end{equation*}
    for any $T_1^h=T_2^h$. The post-processing property of DP immediately implies
    \begin{equation*}
        p_1(I)\le e^{\varepsilon}p_2(I).
    \end{equation*}

    We then turn to the case that $1\in I$. We will prove the following conclusion for every $i\notin I$:
    \begin{equation*}
        p_1(I)\le e^{O(1)}p_2((I\setminus\{1\})\cup\{i\})+O(\delta).
    \end{equation*}

    Let $T_2'$ and $W_2'$ be the partitions of $S_2$ using $(I\setminus \{1\})\cup \{i\}$ as the index set. Since $1\in I$, we have $T_1\setminus \{(x_1, y_1)\} = T_2' \setminus \{(x_i, y_i)\} = \hat{T}$ for some $\hat{T}$ of size $\lvert I\rvert - 1$. Let $H_1$ and $H_2'$ denote the candidate sets constructed during the execution of $\Rel(T_1, W_1)$ and $\Rel(T_2', W_2')$. For each possible labeling $\hat{T}^c$ of $\hat{T}_\X$, define $P_1(c) = \left\{f\in H_1:\err_{\hat{T}^c}(f) = 0\right\}$ and $P_2'(c) = \left\{f\in H_2':\err_{\hat{T}^c}(f) = 0\right\}$, i.e., the sets consisting of hypotheses in $H_1$ and $H_2'$ that agree with $c$ on $\hat{T}_\X$. Since the label set is $\{0, 1\}$, we have $1\le \lvert P_1(c)\rvert, \lvert P_2'(c)\rvert \le 2$.

    We next pick arbitrary $h_1\in P_1(c)$ and $h_2'\in P_2'(c)$ and compare their scores. Suppose $q(T_1{\circ}W_1, h_1)$ is minimized by $f_1$. Note that $W_1$ and $W_2'$ are neighboring datasets (since $W_1\setminus \{(x_i, y_i)\} = W_2' \setminus \{(x_1', y_1')\}$), and $h_1$ and $h_2'$ agree on $\hat{T}_\X$, we have
    \begin{align*}
        q(T_2'{\circ}W_2', h_2') &= \min_{f\in \C} \left\{\dis_{(T_2')_\X}(h_2', f) + \err_{W_2'}(f) \right\}\\&\le \dis_{(T_2')_\X}(h_2', f_1) + \err_{W_2'}(f_1) \\&\le \dis_{(T_1)_\X}(h_1, f_1) + \frac{1} { \lvert T_1\rvert} + \err_{W_1}(f_1) + \frac{1} { \lvert W_1\rvert} \\&= q(T_1{\circ}W_1, h_1) + \frac{1}{  \lvert T_1\rvert }+ \frac{1 }{ \lvert W_1\rvert}.
    \end{align*}
    Since $\lvert T_1\rvert\ge n\varepsilon$ and $\lvert W_1\rvert \le n - n\varepsilon$, we have $\frac{\lvert W_1\rvert }{ \lvert T_1\rvert} \le \frac{1-\varepsilon}{  \varepsilon}$. Thus, 
    \begin{align*}
        \exp(-\varepsilon\cdot q(T_2'{\circ}W_2', h_2')/2\Delta) &\ge \exp\left(-\varepsilon\cdot \left(q(T_1{\circ}W_1, h_1) + \frac{1}{ \lvert T_1\rvert} + \frac{1}{ \lvert W_1\rvert}\right)/2\Delta\right) \\
        &= \exp(-\varepsilon\cdot q(T_1{\circ}W_1, h_1)/2\Delta)\cdot\exp\left(-\frac{\varepsilon}{2}\cdot\left(\frac{\lvert W_1\rvert}{\lvert T_1\rvert} + \frac{\lvert W_1\rvert}{\lvert W_1\rvert}\right)\right) \\
        &\ge \exp(-\varepsilon\cdot q(T_1{\circ}W_1, h_1)/2\Delta)\cdot \exp\left(-\frac{\varepsilon}{2}\cdot\left(\frac{1-\varepsilon}{\varepsilon} + 1\right) \right) \\
        &= \exp(-\varepsilon\cdot q(T_1{\circ}W_1, h_1)/2\Delta)\cdot \exp(-1/2).
    \end{align*}

    By symmetry (because the above analysis only relies on the facts that $W_1$ and $W_2'$ are neighboring and $h_1$ and $h_2'$ agree on $\hat{T}_\X$), we have 
    \begin{equation*}\exp(-\varepsilon\cdot q(T_1{\circ}W_1, h_1)/2\Delta)\ge \exp(-\varepsilon\cdot q(T_2'{\circ}W_2', h_2')/2\Delta)\cdot \exp(-1/2).
    \end{equation*}
    Then, the fact that $1\le \lvert P_1(c)\rvert, \lvert P_2'(c)\rvert \le 2$ gives
    \begin{equation*}
        \sum_{h_1\in P_1(c)} \exp(-\varepsilon\cdot q(T_1{\circ}W_1, h_1)/2\Delta) \ge \frac{1}{2}\sum_{h_2'\in P_2'(c)} \exp(-\varepsilon\cdot q(T_2'{\circ}W_2', h_2')/2\Delta)\cdot \exp(-1/2).
    \end{equation*}
    Summing over all hypotheses in $H_1$, we get
    \begin{align*}
        \sum_{f\in H_1}\exp(-\varepsilon\cdot q(T_1{\circ}W_1, f)/2\Delta) &= \sum_{\hat{T}^c}\sum_{h_1\in P_1(c)} \exp(-\varepsilon\cdot q(T_1{\circ}W_1, h_1)/2\Delta) \\
        &\ge \sum_{\hat{T}^c}\frac{1}{2}\sum_{h_2'\in P_2'(c)}\exp(-\varepsilon\cdot q(T_2'{\circ}W_2', h_2')/2\Delta)\cdot \exp(-1/2) \\
        &=\frac{1}{2\sqrt{e}}\sum_{f\in H_2'}\exp(-\varepsilon\cdot q(T_2'{\circ}W_2', f)/2\Delta).
    \end{align*}

    Note that $(T_1)^{h_1}$ and $(T_2')^{h_2'}$ are neighboring datasets (since $h_1$ and $h_2'$ agree on $\hat{T}_\X$). Then by the fact that $\A$ is $(1,\delta)$-differentially private, we have

    \begin{align*}
        &\Pr[\Rel(T_1, W_1) = (T_1)^{h_1}] \cdot \Pr[\A((T_1)^{h_1})\in O] \\
        ={}& \frac{\exp(-\varepsilon\cdot q(T_1{\circ}W_1, h_1)/2\Delta)}{\sum_{f\in H_1}\exp(-\varepsilon\cdot q(T_1{\circ}W_1, f)/2\Delta)}\cdot \Pr[\A((T_1)^{h_1})\in O] \\
        \le{}& 2e\cdot \frac{\exp(-\varepsilon\cdot q(T_2'{\circ}W_2', h_2')/2\Delta)}{\sum_{f\in H_2'}\exp(-\varepsilon\cdot q(T_2'{\circ}W_2', f)/2\Delta)}\cdot (e\cdot \Pr[\A((T_2')^{h_2'})\in O]+\delta) \\
        ={}& 2e\cdot\Pr[\Rel(T_2', W_2') = (T_2')^{h_2'}]\cdot(e\cdot \Pr[\A((T_2')^{h_2'})\in O]+\delta).
    \end{align*}
    
    We can then bound $p_1(I)$ as follows:
    \begin{align*}
        p_1(I) &= \Pr[\A(\Rel(T_1, W_1))\in O] \\
        &= \sum_{\hat{T}^c}\sum_{h_1\in P_1(c)}\Pr[\Rel(T_1, W_1) = (T_1)^{h_1}]\cdot\Pr[\A((T_1)^{h_1})\in O] \\
        &\le \sum_{\hat{T}^c}2\sum_{h_2'\in P_2'(c)}2e\cdot\Pr[\Rel(T_2', W_2') = (T_2')^{h_2'}]\cdot(e\cdot\Pr[\A((T_2')^{h_2'})\in O] + \delta)\\
        &= 4e\cdot\left(e\cdot \Pr[\A(\Rel(T_2', W_2'))\in O] + \delta\right)\\ 
        &= e^{2+2\ln 2}p_2((I\setminus\{1\})\cup\{i\}) + 4e\delta.
    \end{align*}

    Note that the summation $\sum_{I:1\in I}\sum_{i\in[n]\setminus I}p_2((I\setminus\{1\})\cup\{i\})$ actually counts every $p_2(I)$ (where $1\notin I$) exactly $\lvert I\rvert$ times. Thus,
    \begin{align*}
        \sum_{I:1\in I}p_1(I) &= \frac{1}{n - \lvert I\rvert}\sum_{I:1\in I}\sum_{i\in[n]\setminus I}p_1(I) \\
        &\le \frac{1}{n - \lvert I\rvert}\sum_{I:1\in I}\sum_{i\in[n]\setminus I}\left[e^{2+2\ln 2}p_2((I\setminus\{1\})\cup\{i\})+4e\delta\right] \\
        &= \frac{\lvert I\rvert}{n - \lvert I \rvert}\sum_{I:1\notin I}e^{2+2\ln 2}p_2(I)+4e\delta\cdot \binom{n - 1}{ \lvert I\rvert - 1} \\
        &= O(\varepsilon)\sum_{I:1\notin I}p_2(I) + O(\delta)\cdot \binom{n - 1}{ \lvert I\rvert - 1},
    \end{align*}
    where in the last line we use the fact that 
    \begin{equation*}
        \frac{\lvert I\rvert}{n - \lvert I\rvert} = \frac{\lceil \varepsilon n\rceil}{n - \lceil \varepsilon n\rceil} \le \frac{2 \varepsilon n}{n - 2\varepsilon n} \le 6\varepsilon
    \end{equation*}
    assuming $\varepsilon n\ge 1$ and $\varepsilon \le 1/3$. This implies that
    \begin{align*}
        \Pr[\Agn(S_1)\in O] &= \frac{1}{\binom{n}{ \lvert I\rvert}}\left(\sum_{I:1\notin I} p_1(I) + \sum_{I:1\in I}p_1(I)\right)\\
        &\le \frac{1}{\binom{n}{\lvert I\rvert}}\left(e^{\varepsilon}\sum_{I:1\notin I}p_2(I) + O(\varepsilon)\sum_{I:1\notin I}p_2(I) + O(\delta)\cdot \binom{n - 1}{\lvert I\rvert - 1}\right) \\
        &\le (e^{\varepsilon} + O(\varepsilon))\cdot \frac{1}{\binom{n}{\lvert I\rvert}}\sum_{I} p_2(I) + O(\delta)\cdot\frac{\lvert I\rvert}{n}\\
        &\le e^{O(\varepsilon)}\Pr[\Agn(S_2)\in O] + O(\varepsilon\delta).
    \end{align*}
\end{proof}

\begin{algorithm}[t]
    \label{alg:relabel}
    \caption{$\Rel$}
    \DontPrintSemicolon 
    \KwInput{Parameter $\varepsilon$, Datasets $T,W$}
        Initialize $H = \emptyset$\;
        For every possible labeling in $\Pi_{\C}(T_\X)$, add to $H$ an arbitrary concept $h\in \C$ that is consistent with the labeling\;
        Define the following score function $q$:
        \begin{equation*}
            q(T{\circ}W, h) = \min_{f\in \C} \left\{\dis_{T_\X}(h, f) + \err_{W}(f)\right\}
        \end{equation*}\\
        Choose $h\in H$ using the exponential mechanism with privacy parameter $\varepsilon$, score function $q$, and sensitivity parameter $\Delta=\frac{1}{\lvert W\rvert}$\;
    Relabel $T$ using $h$ and output the relabeled dataset $T^h$\;
\end{algorithm}

\begin{algorithm}[t]
    \label{alg:agnostic}
    \caption{$\Agn$}
    \DontPrintSemicolon 
    \KwInput{Parameter $\varepsilon$, Dataset $S=\{(x_1,y_1),\dots,(x_n,y_n)\}$, Private Algorithm $\A$}
        Sample a subset $I\subseteq [n]$ of size $\lvert I\rvert = \lceil \varepsilon n\rceil$ uniformly at random\;
        Let $T=\{(x_i, y_i)\mid i\in I\}$ and $W=\{(x_i,y_i)\mid i\in [n]\setminus I\}$\;
        Execute $\Rel$ (Algorithm~\ref{alg:relabel}) with parameter $\varepsilon$ and input datasets $T, W$ to obtain relabeled dataset $T^h$\;
        Output $\A(T^h)$\;
\end{algorithm}

We then prove the utility guarantee of $\Agn$. The first step is to show that $\Rel$ will relabel the dataset using some $h$ whose generalization error is close to the optimal. We show that it suffices to set $\lvert T\rvert = \widetilde{O}(\vc(\C)/\alpha)$ and $\lvert W\rvert = \widetilde{O}(\vc(\C)\cdot\max(1/\alpha^2,1/\alpha\varepsilon))$.

\begin{claim}
\label{cla:util_rel}
    Let $T$ and $W$ be two datasets with every data point sampled i.i.d. from $\D$. Suppose
    \begin{equation*}
        \lvert T\rvert\ge C_1\cdot \frac{\vc(\C)\ln(1/\alpha) + \ln(1/\beta)}{\alpha}
    \end{equation*}
    and
    \begin{equation*}
        \lvert W\rvert\ge \max\left( C_2\cdot \frac{\vc(\C) + \ln(1/\beta)}{\alpha^2}, \frac{\lvert T\rvert}{6\varepsilon}\right),
    \end{equation*}
    where $C_1$ and $C_2$ are universal constants. Then with probability $1-\beta$, $\Rel$ (Algorithm~\ref{alg:relabel}) will relabel $T$ using some $h\in \C$ such that
    \begin{equation*}
        \err_{\D}(h) \le \inf_{c\in \C} \err_\D(c) + \alpha.
    \end{equation*}
\end{claim}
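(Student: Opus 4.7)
The plan is to apply the exponential mechanism's utility guarantee together with the two uniform-convergence bounds so that the hypothesis $h$ chosen by $\Rel$ tracks the best concept $c^*\in\arg\inf_{c\in\C}\err_\D(c)$ in generalization error. I would condition on three events, each failing with probability at most $\beta/3$: (i) the realizable generalization bound for $\C$ on $T_\X$ (Lemma~\ref{lem:rea_gen}) at some appropriate constant multiple of $\alpha$, which is available because the lower bound on $\lvert T\rvert$ matches the requirement of Lemma~\ref{lem:rea_gen} up to a constant factor; (ii) the agnostic generalization bound for $\C$ on $W$ (Lemma~\ref{lem:agn_gen}) at scale $\alpha$, enabled by the $\lvert W\rvert\ge C_2(\vc(\C)+\ln(1/\beta))/\alpha^2$ branch of the hypothesis; and (iii) the exponential mechanism guarantee of Lemma~\ref{lem:prop_exp}. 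The bounds on $\lvert T\rvert$ and $\lvert W\rvert$ are set up precisely to make these events simultaneously applicable.

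By construction, $H$ contains a concept $h^*\in\C$ that agrees with $c^*$ on all of $T_\X$, so $\dis_{T_\X}(h^*, c^*) = 0$ and, choosing $f=c^*$ in the minimization defining $q$, we get $q(T{\circ}W, h^*)\le \err_W(c^*)\le \err_\D(c^*)+\alpha$. The exponential mechanism therefore returns an $h$ with
\begin{equation*}
    q(T{\circ}W, h)\le \err_\D(c^*)+\alpha + \frac{2}{\varepsilon\lvert W\rvert}\ln(\lvert H\rvert/\beta).
\end{equation*}
Using $\lvert W\rvert\ge \lvert T\rvert/(6\varepsilon)$ to eliminate the $\varepsilon$ factor, then Sauer's lemma $\lvert H\rvert\le (e\lvert T\rvert/\vc(\C))^{\vc(\C)}$ and the lower bound on $\lvert T\rvert$ to evaluate the logarithm, the additive term is $O(\alpha)$, so $q(T{\circ}W, h)\le \err_\D(c^*)+O(\alpha)$.

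The main obstacle is to convert this score bound into a bound on $\err_\D(h)$ without losing a factor of two. Let $f_h\in\C$ attain the minimum defining $q(T{\circ}W, h)$, so $\dis_{T_\X}(h, f_h)+\err_W(f_h)\le \err_\D(c^*)+O(\alpha)$. By agnostic generalization on $W$ we have $\err_W(f_h)\ge \err_\D(f_h)-\alpha$, and by optimality of $c^*$ inside $\C$ we have $\err_\D(f_h)\ge \err_\D(c^*)$. Substituting both and canceling the $\err_\D(c^*)$ terms yields the crucial bound
\begin{equation*}
    \dis_{T_\X}(h, f_h)\le \err_\D(c^*) - \err_\D(f_h) + O(\alpha) \le O(\alpha).
\end{equation*}
The realizable generalization bound on $T$ (event $E_1$ of Lemma~\ref{lem:rea_gen}) at scale $O(\alpha)$ then upgrades this to $\dis_{\D_\X}(h, f_h)\le O(\alpha)$, and the triangle inequality together with $\err_\D(f_h)\le \err_W(f_h)+\alpha\le \err_\D(c^*)+O(\alpha)$ gives
\begin{equation*}
    \err_\D(h)\le \err_\D(f_h) + \dis_{\D_\X}(h, f_h)\le \err_\D(c^*)+O(\alpha).
\end{equation*}
Choosing the absolute constants $C_1$ and $C_2$ sufficiently large absorbs all hidden constants into the error parameter and yields the stated bound $\err_\D(h)\le \inf_{c\in\C}\err_\D(c)+\alpha$.
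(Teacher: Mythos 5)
Your proof is correct and matches the paper's argument essentially line for line: the same three high-probability events (agnostic generalization on $W$, realizable generalization on $T_\X$, and the exponential mechanism's utility guarantee), the same witness $h^*\in H$ agreeing with a near-optimal concept on $T_\X$, the same cancellation trick to bound $\dis_{T_\X}(h,f_h)$ by $O(\alpha)$, and the same triangle-inequality finish. The only cosmetic difference is that the paper takes the witness concept to be an empirical risk minimizer over $W$ rather than a generalization-error minimizer $c^*$, which sidesteps the pedantic point that $\inf_{c\in\C}\err_\D(c)$ need not be attained (your argument can be repaired by taking $c^*$ to be a near-minimizer), but this does not change the structure of the proof.
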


We use the following technical lemma~\citep{anthony1999neural} in bounding the sample complexity incurred by the exponential mechanism.
\begin{lemma}
\label{lem:vc_tech}
Let $d\ge 1$ and $\alpha,\beta\in(0, 1)$. Then if $n\ge \frac{2d\ln(2/\alpha) + 2\ln(1/\beta)}{\alpha}$, we have
\begin{equation*}
    n\alpha \ge d\ln\left(\frac{en}{d}\right) + \ln\left(\frac{1}{\beta}\right).
\end{equation*}
\end{lemma}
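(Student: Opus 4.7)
The plan is to reduce the stated inequality to the elementary fact that $k-\ln k\ge 1$ for every $k>0$, via a substitution that cleanly separates the $\ln(2/\alpha)$ and $\ln(1/\beta)$ terms from the $\ln(n/d)$ term.

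First, I would set $k := n\alpha/(2d)$. The hypothesis $n\alpha \ge 2d\ln(2/\alpha)+2\ln(1/\beta)$ rewrites as
\[
k \;\ge\; \ln(2/\alpha) \;+\; \tfrac{1}{d}\ln(1/\beta),
\]
and in particular $k \ge \ln(2/\alpha) > 0$ since $\alpha<1$, so $\ln k$ is well-defined. Next, I would rewrite the target inequality $n\alpha \ge d\ln(en/d)+\ln(1/\beta)$ as
\[
2kd \;\ge\; d\bigl(1+\ln(n/d)\bigr) + \ln(1/\beta).
\]
Using $n/d = 2k/\alpha$, the right-hand side becomes $d + d\ln(2/\alpha) + d\ln k + \ln(1/\beta)$, so after dividing by $d$ the target reduces to
\[
2k \;\ge\; 1 + \ln(2/\alpha) + \ln k + \tfrac{1}{d}\ln(1/\beta).
\]

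At this point I would invoke the hypothesis bound $\ln(2/\alpha)+\tfrac{1}{d}\ln(1/\beta)\le k$ to absorb those two terms into one copy of $k$ on the left, reducing the goal to $2k \ge 1 + k + \ln k$, i.e.\ $k \ge 1 + \ln k$. This final inequality is just the standard $\ln x \le x-1$ evaluated at $x=k$, which holds for every $k>0$, and finishes the proof.

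The only place that needs genuine thought is choosing the substitution: the naive approach of splitting $n\alpha$ into two equal halves and separately dominating $d\ln(en/d)$ and $\ln(1/\beta)$ fails, because $\ln(en/d)$ grows with $n$ and any attempt to bound it by a constant multiple of $d\ln(2/\alpha)$ forces an upper bound on $n$ in the wrong direction. The substitution $k = n\alpha/(2d)$ is engineered so that the $n$-dependence of both sides collapses into the single term $\ln k$, making them algebraically compatible; everything after that is a one-line appeal to $\ln x \le x-1$.
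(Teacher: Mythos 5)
Your proof is correct and complete. The paper itself does not prove this lemma --- it simply cites it from \citet{anthony1999neural} --- so there is no in-paper argument to compare against, but your derivation is exactly the sort of self-contained proof that reference gives for this family of ``inverse-logarithm'' bounds: the substitution $k = n\alpha/(2d)$ collapses the $n$-dependence of both sides into a single $\ln k$ term, after which everything reduces to the elementary inequality $\ln x \le x - 1$. One small remark: your proof in fact never uses $d\ge 1$ beyond $d>0$, since the $\tfrac{1}{d}\ln(1/\beta)$ term appears identically in the rewritten hypothesis and the rewritten target, so the argument is slightly more general than the stated lemma.
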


\begin{proof}[Proof of Claim~\ref{cla:util_rel}]
    Define the following three events:
    \begin{itemize}
        \item $E_1$: For every $c\in\C$, it holds that $\lvert\err_{\D}(c) - \err_{W}(c)\rvert \le \alpha/9$.
        \item $E_2$: The exponential mechanism chooses an $h\in H$ such that \begin{equation*}q(T{\circ}W,h)\le \min_{f\in H}q(T{\circ}W,f) + \alpha/9.\end{equation*}
        \item $E_3$: For any $h_1, h_2\in\C$ such that $\dis_{T_\X}(h_1, h_2)\le \alpha/3$, it holds that $\dis_{\D_\X}(h_1, h_2)\le 2\alpha/3$.
    \end{itemize}

    We first show that $T$ will be relabeled by some $h$ such that $\err_{\D}(h)\le \inf_{c\in\C}\err_\D(c) + \alpha$ if the above events happen. Let $\eta = \inf_{c\in\C}\err_\D(c)$. Let $f_0\in\C$ be some concept that minimizes the empirical error on $W$, i.e., $\err_{W}(f_0) = \min_{c\in\C}\err_{W}(c)$. Then $E_1$ implies that $\err_{W}(f_0) \le \inf_{c\in\C}\err_{W}(c) \le \inf_{c\in\C}\err_{\D}(c) + \alpha/9 = \eta + \alpha/9$. Since every labeling in $\Pi_{\C}(T_\X)$ is labeled by some $h\in H$, there exists some $h_0\in H$ that agrees with $f_0$ on $T_\X$. Thus,
    \begin{align*}
        q(T{\circ}W, h_0) &= \min_{f\in\C}\left\{\dis_{T_\X}(h_0, f) + \err_W(f)\right\} \\
        &\le \dis_{T_\X}(h_0, f_0) + \err_W(f_0) \\
        &\le \eta + \alpha / 9.
    \end{align*}
    Then, event $E_2$ ensures that the exponential mechanism outputs some $h\in H$ such that
    \begin{align*}
        q(T{\circ}W,h) &\le \min_{f\in H}q(T{\circ}W, f) + \alpha / 9\\
        &\le q(T{\circ}W, h_0) + \alpha/ 9 \\
        &\le \eta + 2\alpha / 9.
    \end{align*}

    Suppose $q(T{\circ}W, h) = \dis_{T_\X}(h, f) + \err_W(f)$ for some $f\in \C$. Event $E_1$ ensures that
    \begin{equation*}
        \err_{\D}(f)\le \err_W(f) + \alpha / 9 \le q(T{\circ}W, h) + \alpha /9\le \eta + \alpha / 3.
    \end{equation*}
    Moreover, since $\err_{\D}(f) \ge \eta$, we have
    \begin{align*}
        \dis_{T_\X}(h, f) &= q(T{\circ}W, h) - \err_W(f) \\ &\le q(T{\circ}W, h) - (\err_{\D}(f) - \alpha/9) \\ &\le q(T{\circ}W, h) - \eta + \alpha / 9 \\ &\le \eta + 2\alpha / 9- \eta + \alpha / 9\\ &= \alpha / 3.
    \end{align*}
    Event $E_3$ then ensures that $\dis_{\D_\X}(h, f) \le 2\alpha /3$. By the triangle inequality, we obtain
    \begin{equation*}
        \err_{\D}(h) \le \err_{\D}(f) + \dis_{\D_\X}(h, f) \le \eta + \alpha/3 + 2\alpha/3 = \eta + \alpha.
    \end{equation*}

    To complete the proof, we now show $E_1\cap E_2\cap E_3$ happens with probability $1-\beta$. By Lemma~\ref{lem:agn_gen}, $E_1$ happens with probability $1-\beta / 3$ given that $\lvert W\rvert \ge C\cdot \frac{\vc(\C) + \ln(1/\beta)}{\alpha^2}$ for some constant $C$.

    We then consider $E_2$. By Sauer's Lemma (Lemma~\ref{lem:sauer}), we have $\lvert H\rvert \le \left(\frac{e\lvert T\rvert}{\vc(\C)}\right)^{\vc(\C)}$. Then by Lemma~\ref{lem:prop_exp}, with probability $1-\beta/3$, the exponential mechanism selects some $h$ such that
    \begin{align*}
        q(T{\circ}W,h) &\le \min_{f\in H}q(T{\circ}W, f) + \frac{2}{\lvert W\rvert\varepsilon}\ln(\lvert H\rvert/\beta) \\
        &\le \min_{f\in H}q(T{\circ}W, f) + \frac{12}{\lvert T\rvert}\left(\vc(\C)\ln\left(\frac{e\lvert T\rvert}{\vc(\C)}\right) + \ln(1/\beta)\right) \\
        &\le \min_{f\in H}q(T{\circ}W, f) + \alpha / 9,
    \end{align*}
    where in the second inequality we use $\lvert W\rvert\ge \frac{\lvert T\rvert}{6\varepsilon}$ and in the last inequality we apply Lemma~\ref{lem:vc_tech}, which requires $\lvert T\rvert\ge C'\cdot\frac{\vc(\C)\ln(1/\alpha) + \ln(1/\beta)}{\alpha}$.
    This means $E_2$ happens with probability $1-\beta /3$.

    Finally, Lemma~\ref{lem:rea_gen} implies that event $E_3$ happens with probability $1-\beta / 3$ given that $ \lvert T\rvert\ge C''\cdot\frac{\vc(\C)\ln(1/\alpha) + \ln(1/\beta)}{\alpha}$. Therefore, we have $\Pr[E_1\cap E_2\cap E_3] \ge 1-\beta$ by the union bound.
\end{proof}

Now it remains to show that the output hypothesis $\A(T^h)$ is close to $h$ on the underlying distribution $\D$. When $\A$ is a proper learner (i.e., $\A(T^h)\in\C$), this is directly implied by the realizable generalization.

To handle the case that $\A$ may be improper, we adopt the proof strategy of~\citet{alon2020closure}. The key idea is to construct an auxiliary algorithm $\Aux$ (Algorithm~\ref{alg:aux}). It splits $T$ into $T=U{\circ}V$ (we set $\lvert U \rvert=\lvert V \rvert = \lvert T\rvert/2$) and outputs $\A(T^h)\oplus \bar{h}$ for some $\bar{h}\in\C$ that is consistent with $V^h$. Given that $\lvert V\rvert$ is sufficiently large, the realizable generalization bound implies that $\bar{h}$ is close to $h$ on the underlying distribution $\D$, hence on $U_\X$ as well. Since the output hypothesis $\A(T^h)$ is also close to $h$ over $U_\X$, $\A(T^h)$ should be close to $\bar{h}$ on $U_\X$.

\begin{algorithm}[t]
    \label{alg:aux}
    \caption{$\Aux$}
    \DontPrintSemicolon 
    \KwInput{Parameter $\varepsilon$, Datasets $U, V, W$, Private Algorithm $\A$}
        Execute $\Rel$ (Algorithm~\ref{alg:relabel}) with parameter $\varepsilon$ and input datasets $T=U{\circ}V, W$ to obtain relabeled dataset $T^h=U^h{\circ}V^h$\;
        Select an arbitrary $\bar{h}\in \C$ that is consistent with $V^h$\;
        Output $\A(T^h)\oplus\bar{h}$\;
\end{algorithm}

We prove that $\Aux$ satisfies a constant level of differential privacy using an argument similar to part of the proof of Lemma~\ref{lem:priv_agn}. This allows us to bound the generalization disagreement between $\A(T^h)$ and $\bar{h}$ by the generalization property of DP. Therefore, we have $\err_{\D}(\A(T^h))\approx\err_{\D}(\bar{h})\approx\err_{\D}(h)$, which proves the utility guarantee of $\Agn$.

\begin{lemma}[Utility of $\Agn$]
\label{lem:util_agn}
    Suppose $\A$ is a $(1, \delta)$-differentially private $(\alpha, \beta)$-PAC empirical learner with sample complexity $m$. Then $\Agn$ (Algorithm~\ref{alg:agnostic}) is an $(O(\alpha), O(\beta + \varepsilon n\delta))$-agnostic learner with sample complexity
    \begin{equation*}
        n = O\left(\frac{m}{\varepsilon} + \frac{\vc(\C)\log(1/\alpha) + \log(1/\beta)}{\alpha\varepsilon} + \frac{\vc(\C) + \log(1/\beta)}{\alpha^2}\right).
    \end{equation*}
\end{lemma}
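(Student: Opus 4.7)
The plan is to combine the quality of the relabeling (Claim~\ref{cla:util_rel}) with the empirical-error guarantee of $\A$ on $T^h$, and then transfer closeness on $T$ to closeness on $\D$ using a DP-generalization argument via the auxiliary algorithm $\Aux$, so as to handle the possibly improper $\A$.

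\textbf{Step 1 (Sample-size bookkeeping and the relabeling).} I would first check that the stated $n$ forces $\lvert T\rvert=\lceil\varepsilon n\rceil\ge \max\{m,\,C\vc(\C)\ln(1/\alpha)/\alpha\}$, $\lvert W\rvert=n-\lvert T\rvert\ge C\vc(\C)/\alpha^2$, and $\lvert W\rvert\ge \lvert T\rvert/(6\varepsilon)$. Under these bounds Claim~\ref{cla:util_rel} gives, with probability $1-O(\beta)$, a relabeling $T^h$ with $\err_\D(h)\le \inf_{c\in\C}\err_\D(c)+\alpha$. Since $\lvert T\rvert\ge m$, the empirical-learner guarantee applies to $g:=\A(T^h)$ and yields $\err_{T^h}(g)\le\alpha$ with probability $1-\beta$; splitting $T=U\circ V$ with $\lvert U\rvert=\lvert V\rvert=\lvert T\rvert/2$ then gives $\dis_{U_\X}(g,h)\le 2\alpha$.

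\textbf{Step 2 (Approximating $h$ by a concept $\bar h\in\C$).} Let $\bar h\in\C$ be any concept consistent with $V^h$, so $\dis_{V_\X}(h,\bar h)=0$. Applying Lemma~\ref{lem:rea_gen} on $V_\X$ (with $\lvert V\rvert\ge \widetilde O(\vc(\C)/\alpha)$) yields $\dis_{\D_\X}(h,\bar h)\le 2\alpha$, and applying it on $U_\X$ transfers this to $\dis_{U_\X}(h,\bar h)\le 4\alpha$. Combined with the previous step, the triangle inequality gives $\dis_{U_\X}(g,\bar h)\le 6\alpha$, i.e.\ the predicate $g\oplus\bar h$ has empirical average at most $6\alpha$ on $U_\X$.

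\textbf{Step 3 (DP generalization through $\Aux$).} The key step is to show that, viewed as a function of $U$ only (with $V$ and $W$ treated as fixed/public and $\bar h$ chosen canonically from each possible labeling of $V_\X$), $\Aux$ is $(O(1),O(\delta))$-differentially private. This follows by essentially the same calculation as the $1\in I$ case in the proof of Lemma~\ref{lem:priv_agn}: swapping one element of $U$ changes $T$ by one element, the overlap $\hat T$ has size $\lvert T\rvert-1$, and for any two hypotheses agreeing on $\hat T$ the score differs by at most $1/\lvert T\rvert+1/\lvert W\rvert$; with $\Delta=1/\lvert W\rvert$ and $\lvert W\rvert\ge \lvert T\rvert/(6\varepsilon)$, the resulting exponential-mechanism ratio is $e^{O(1)}$. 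The post-processing of $\A$ (itself $(1,\delta)$-DP on the neighboring relabeled datasets) then gives $(O(1),O(\delta))$-DP of $\Aux$ with respect to $U$, and replacing $g$ by $g\oplus\bar h$ is post-processing. Since $U_\X$ consists of i.i.d.\ draws from $\D_\X$, Lemma~\ref{lem:dp_gen} applied to the predicate $g\oplus\bar h$ gives
\begin{equation*}
\dis_{\D_\X}(g,\bar h)\ \le\ e^{O(1)}\Bigl(\dis_{U_\X}(g,\bar h)+\tfrac{O(1)}{\lvert U\rvert}\log\tfrac{1}{\delta\lvert U\rvert}\Bigr)\ \le\ O(\alpha)
\end{equation*}
with failure probability $O(\varepsilon n\delta)$ (using $\lvert U\rvert=\Theta(\varepsilon n)$ and the sample-size bound on $\lvert U\rvert$).

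\textbf{Step 4 (Conclusion).} A final triangle inequality yields
\begin{equation*}
\err_\D(g)\ \le\ \err_\D(\bar h)+\dis_{\D_\X}(g,\bar h)\ \le\ \err_\D(h)+\dis_{\D_\X}(h,\bar h)+O(\alpha)\ \le\ \inf_{c\in\C}\err_\D(c)+O(\alpha),
\end{equation*}
and a union bound over all failure events gives total failure probability $O(\beta+\varepsilon n\delta)$. The hardest part is the privacy analysis of $\Aux$ in Step 3: one must mirror the delicate $1\in I$ bookkeeping of Lemma~\ref{lem:priv_agn} while carefully arguing that the choice of $\bar h$ only reads $V^h$ (public $V$ plus the already-accounted-for $h$), so it introduces no additional $U$-dependence beyond what is captured by $\Rel$.
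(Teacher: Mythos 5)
Your proposal follows the same route as the paper: Claim~\ref{cla:util_rel} for the relabeling, the empirical-learner guarantee on $T^h$, a split $T=U\circ V$ and a canonical $\bar h$ consistent with $V^h$, two applications of the realizable generalization bound, then DP generalization through $\Aux$ and a final triangle inequality. The overall structure and the intermediate estimates match the paper's argument. (Minor: in Step~3 the $1/\lvert W\rvert$ contribution to the score sensitivity is not needed, since for $\Aux$ the dataset $W$ is held fixed; only $T$ changes by one element. This only makes your constant worse and does not affect correctness.)

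There is one genuine gap in Step~3. When you invoke Lemma~\ref{lem:dp_gen} with the privacy parameter $\delta$ directly, the additive error term is $\tfrac{O(1)}{\lvert U\rvert}\log\tfrac{1}{\delta\lvert U\rvert}$, which is \emph{not} $O(\alpha)$ when $\delta$ is very small (and is vacuous when $\delta=0$). The lemma you are proving places no lower bound on $\delta$, so this case is in scope. The paper handles this by first noting that $(O(1),O(\delta))$-DP implies $(O(1),O(\delta+\beta/\lvert U\rvert))$-DP, and then applying Lemma~\ref{lem:dp_gen} with the larger $\delta$: this caps the additive term at $\tfrac{O(1)}{\lvert U\rvert}\log\tfrac{1}{\beta}\le O(\alpha)$ (using $\lvert U\rvert\gtrsim \log(1/\beta)/\alpha$) while the failure probability becomes $O(\delta\lvert U\rvert+\beta)=O(\varepsilon\delta n+\beta)$, exactly matching the stated lemma. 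Add this boosting of $\delta$ before applying the DP-generalization bound and your proof is complete.
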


To prove Lemma~\ref{lem:util_agn}, we follow the proof strategy of~\citet{alon2020closure}. We first construct an auxiliary algorithm ($\Aux$) and prove the following claim, which allows us to employ the generalization property of DP. The proof is analogous to part of the proof of Lemma~\ref{lem:priv_agn}.

\begin{claim}
\label{cla:priv_aux}
    Suppose $\A$ is $(1, \delta)$-differentially private. For any public $V$ and $W$, $\Aux(U, V, W)$ is $(2+2\ln2, 4e\delta)$-differentially private with respect to $U$.
\end{claim}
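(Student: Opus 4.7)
My plan is to mimic the harder case in the proof of Lemma~\ref{lem:priv_agn}—where the differing element lies inside the subsampled portion $T$—restricted now to the situation that only $U$ varies while $V$ and $W$ are held public. Fix neighboring $U_1, U_2$, let $T_t = U_t \circ V$, and note that the common portion $\hat{T}$ of $T_1$ and $T_2$ has size $\lvert T\rvert - 1$ and crucially contains all of $V$. As in Lemma~\ref{lem:priv_agn}, I will partition the candidate sets $H_1, H_2$ built by $\Rel$ according to the labeling $c$ of $\hat{T}_\X$, obtaining subsets $P_1(c), P_2(c)$ each of size at most $2$.

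There are two pointwise ingredients. First, for any $h_1 \in P_1(c)$ and $h_2 \in P_2(c)$ the same minimizer-swap argument used in Lemma~\ref{lem:priv_agn} yields $\lvert q(T_1 \circ W, h_1) - q(T_2 \circ W, h_2)\rvert \le 1/\lvert T\rvert$, where the $1/\lvert W\rvert$ term is absent because $W$ does not change. With sensitivity $\Delta = 1/\lvert W\rvert$ and privacy parameter $\varepsilon$, the ratio of the corresponding exponential-mechanism weights is at most $\exp(\varepsilon \lvert W\rvert/(2\lvert T\rvert)) \le \exp(1/2)$ using $\lvert W\rvert/\lvert T\rvert \le 1/\varepsilon$, and combining this with the standard $\lvert P_t(c)\rvert \le 2$ denominator comparison gives $\Pr[\Rel(T_1, W) = T_1^{h_1}] \le 2e \cdot \Pr[\Rel(T_2, W) = T_2^{h_2}]$. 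Second, since $h_1, h_2$ agree on $V_\X \subseteq \hat{T}_\X$, one has $V^{h_1} = V^{h_2}$, so $\Aux$ selects the \emph{same} auxiliary concept $\bar{h}(c)$ in both branches; moreover $T_1^{h_1}$ and $T_2^{h_2}$ are neighboring datasets, so the $(1,\delta)$-privacy of $\A$ combined with the deterministic post-processing $\oplus\, \bar{h}(c)$ gives $\Pr[\A(T_1^{h_1}) \oplus \bar{h}(c) \in O] \le e \cdot \Pr[\A(T_2^{h_2}) \oplus \bar{h}(c) \in O] + \delta$.

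To assemble the bound, I expand $\Pr[\Aux(U_1, V, W) \in O] = \sum_c \sum_{h_1 \in P_1(c)} \Pr[\Rel = T_1^{h_1}] \cdot \Pr[\A(T_1^{h_1}) \oplus \bar{h}(c) \in O]$, apply the two pointwise bounds to each summand against an arbitrary $h_2 \in P_2(c)$, and absorb the $\lvert P_1(c)\rvert \le 2$ factor exactly as in Lemma~\ref{lem:priv_agn}. Using $\sum_c \sum_{h_2 \in P_2(c)} \Pr[\Rel = T_2^{h_2}] = 1$ for the additive term, the total becomes $4e \cdot (e \cdot \Pr[\Aux(U_2, V, W) \in O] + \delta) = e^{2+2\ln 2}\Pr[\Aux(U_2, V, W) \in O] + 4e\delta$, which is exactly the $(2+2\ln 2, 4e\delta)$-guarantee. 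The only conceptual subtlety—and the step I expect to require the most care—is ensuring that $\bar{h}$ is synchronized across the two executions, which rests on treating $V$ as public and on pairing candidate hypotheses by their labeling of $\hat{T}$; once that is in place, all remaining estimates are direct specializations of the arithmetic already carried out in Lemma~\ref{lem:priv_agn}.
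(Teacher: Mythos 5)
Your proposal is correct and takes essentially the same route as the paper's proof: you pair candidates by their labeling of $\hat{T}$, use the minimizer-swap argument to get the $1/\lvert T\rvert$ score gap (with no $1/\lvert W\rvert$ term since $W$ is fixed), convert that to a $\sqrt{e}$ weight ratio via $\lvert W\rvert/\lvert T\rvert \le 1/\varepsilon$, use $\lvert P_t(c)\rvert\le 2$ for the normalizer, observe that $V^{h_1}=V^{h_2}$ synchronizes $\bar{h}$, and invoke $(1,\delta)$-DP of $\A$ on the neighboring relabeled datasets. The only cosmetic difference is that the paper uses the slightly tighter bound $\lvert W\rvert/\lvert T_1\rvert\le (1-\varepsilon)/\varepsilon$, but this yields the same final constant $\exp(1/2)$.
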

\begin{proof}
    Let $U_1$ and $U_2$ be two neighboring datasets and $O$ be any set of the outputs of $\Aux$. Then $T_1=U_1{\circ}V$ and $T_2=U_2{\circ}V$ are also neighboring datasets. Therefore, there is some $\hat{T}$ of size $\lvert T_1 \rvert - 1$ such that $T_1\setminus\{(x_1,y_1)\} = T_2\setminus \{(x_1',y_1')\}$ for data points $(x_1,y_1)\in T_1$ and $(x_1',y_1')\in T_2$. Let $H_t$ be the candidate set constructed during the execution of $\Rel(T_t, W)$, where $t\in\{1, 2\}$. For each possible labeling $\hat{T}^c$ of $\hat{T}_\X$, let $P_t(c)$ be the set of hypotheses added to $H_t$ in the execution of $\Rel(T_t, W)$. Pick arbitrary $h_1\in P_1(c)$ and $h_2\in P_2(c)$ and suppose $q(T_1{\circ}W, h_1)$ is minimized by $f_1$, we have 
    \begin{align*}
        q(T_2{\circ}W, h_2) &= \min_{f\in\C}\left\{\dis_{(T_2)_\X}(h_2, f) + \err_{W}(f)\right\} \\
        &\le \dis_{(T_2)_\X}(h_2, f_1) + \err_{W}(f_1) \\
        &\le \dis_{(T_1)_\X}(h_1, f_1) + \frac{1}{\lvert T_1\rvert}+ \err_{W}(f_1) \\
        &=q(T_1{\circ}W, h_1) + \frac{1}{\lvert T_1\rvert}.
    \end{align*}
    Since $\lvert T_1\rvert\ge \varepsilon n$ and $\lvert W\rvert \le n - \varepsilon n$, we have $\frac{\lvert W\rvert }{ \lvert T_1\rvert} \le \frac{1-\varepsilon}{ \varepsilon}$. Thus, 
    \begin{align*}
        \exp(-\varepsilon\cdot q(T_2{\circ}W,h_2) / 2\Delta) &\ge \exp\left(-\varepsilon\cdot\left(q(T_1{\circ}W, h_1) + \frac{1}{\lvert T_1\rvert}\right) / 2\Delta\right) \\
        &= \exp(-\varepsilon\cdot q(T_1{\circ}W,h_1)/2\Delta)\cdot\exp\left(-\frac{\varepsilon\lvert W\rvert}{2\lvert T_1\rvert}\right) \\
        &\ge \exp(-\varepsilon\cdot q(T_1{\circ}W,h_1)/2\Delta)\cdot\exp\left(-\frac{1-\varepsilon}{2}\right) \\
        &\ge \frac{1}{\sqrt{e}}\cdot \exp(-\varepsilon\cdot q(T_1{\circ}W,h_1)/2\Delta).
    \end{align*}
    By symmetry, we also have
    \begin{equation*}
        \exp(-\varepsilon\cdot q(T_1{\circ}W,h_1) / 2\Delta)\ge \frac{1}{\sqrt{e}}\cdot \exp(-\varepsilon\cdot q(T_2{\circ}W,h_2) / 2\Delta).
    \end{equation*}
    The fact that $1\le \lvert P_1(c)\rvert,\lvert P_2(c)\rvert \le 2$ gives
    \begin{equation*}
        \sum_{h_1\in P_1(c)} \exp(-\varepsilon\cdot q(T_1{\circ}W, h_1)/2\Delta) \ge \frac{1}{2\sqrt{e}}\sum_{h_2\in P_2(c)} \exp(-\varepsilon\cdot q(T_2{\circ}W, h_2)/2\Delta).
    \end{equation*}
    Therefore,
    \begin{align*}
        \sum_{f\in H_1}\exp(-\varepsilon\cdot q(T_1{\circ}W, f)/2\Delta) &= \sum_{\hat{T}^c}\sum_{h_1\in P_1(c)} \exp(-\varepsilon\cdot q(T_1{\circ}W, h_1)/2\Delta) \\
        &\ge \sum_{\hat{T}^c}\frac{1}{2}\sum_{h_2\in P_2(c)}\exp(-\varepsilon\cdot q(T_2{\circ}W, h_2)/2\Delta)\cdot \exp(-1/2) \\
        &=\frac{1}{2\sqrt{e}}\sum_{f\in H_2}\exp(-\varepsilon\cdot q(T_2{\circ}W, f)/2\Delta).
    \end{align*}

    Since $h_1$ and $h_2$ agree on $\hat{T}_\X$, $(T_1)^{h_1}$ and $(T_2)^{h_2}$ are neighboring datasets. Moreover, note that $h_1$ and $h_2$ agree on $V_\X$ because $V_\X$ is just part of $\hat{T}_\X$. Therefore, $\Aux(U_1,V,W)$ and $\Aux(U_2,V,W)$ will select the same $\bar{h}$ from $V^{h_1}=V^{h_2}$. By the fact that $\A$ is $(1,\delta)$-differentially private, we obtain
    \begin{align*}
        &\Pr[\Rel(T_1, W) = (T_1)^{h_1}] \cdot \Pr[\A((T_1)^{h_1})\oplus \bar{h}\in O] \\
        ={}& \frac{\exp(-\varepsilon\cdot q(T_1{\circ}W, h_1)/2\Delta)}{\sum_{f\in H_1}\exp(-\varepsilon\cdot q(T_1{\circ}W, f)/2\Delta)}\cdot \Pr[\A((T_1)^{h_1})\oplus \bar{h}\in O] \\
        \le{}& 2e\cdot \frac{\exp(-\varepsilon\cdot q(T_2{\circ}W, h_2)/2\Delta)}{\sum_{f\in H_2}\exp(-\varepsilon\cdot q(T_2{\circ}W, f)/2\Delta)}\cdot (e\cdot \Pr[\A((T_2)^{h_2})\oplus \bar{h}\in O]+\delta) \\
        ={}& 2e\cdot\Pr[\Rel(T_2, W) = (T_2)^{h_2}]\cdot(e\cdot \Pr[\A((T_2)^{h_2})\oplus \bar{h}\in O]+\delta).
    \end{align*}

    Let $\bar{h}=\bar{h}(V^h)$ denote the selection rule of $\bar{h}$. Summing over all labelings gives
    \begin{align*}
        &\Pr[\Aux(U_1, V, W)\in O] \\
        ={}& \sum_{\hat{T}^c}\sum_{h_1\in P_1(c)}\Pr[\Rel(T_1, W) = (T_1)^{h_1}]\cdot\Pr[\A((T_1)^{h_1})\oplus\bar{h}(V^{h_1})\in O] \\
        \le{}& \sum_{\hat{T}^c}2\sum_{h_2\in P_2(c)}2e\cdot\Pr[\Rel(T_2, W) = (T_2)^{h_2}]\cdot(e\cdot\Pr[\A((T_2)^{h_2})\oplus\bar{h}(V^{h_2})\in O] + \delta)\\
        ={}& 4e\cdot\left(e\cdot \Pr[\Aux(U_2, V, W)\in O] + \delta\right)\\ 
        ={}& e^{2+2\ln 2}\Pr[\Aux(U_2, V, W)\in O] + 4e\delta.
    \end{align*}
\end{proof}

\begin{proof}[Proof of Lemma~\ref{lem:util_agn}]
    Recall that
    \begin{equation*}
        n = O\left(\frac{m}{\varepsilon} + \frac{\vc(\C)\log(1/\alpha) + \log(1/\beta)}{\alpha\varepsilon} + \frac{\vc(\C) + \log(1/\beta)}{\alpha^2}\right).
    \end{equation*}
    Moreover, assuming $\varepsilon \le 1/3$ and $\varepsilon n\ge 1$, we have
    \begin{equation*}
        \frac{\lvert W \rvert}{\lvert T\rvert} = \frac{n - \lceil \varepsilon n\rceil}{\lceil \varepsilon n\rceil} \ge \frac{n-2 \varepsilon n}{2 \varepsilon n} \ge \frac{1}{6\varepsilon}.
    \end{equation*}
    Therefore, it is not hard to verify that the conditions in Claim~\ref{cla:util_rel} are fulfilled. Thus, with probability $1-\beta$, $T$ will be relabeled by some $h$ with $\err_{\D}(h) \le \inf_{c\in \C} \err_\D(c) + \alpha$. 
    
    Since $\A$ is an $(\alpha,\beta)$-PAC empirical learner, the final output hypothesis $g=\A(T^h)$ satisfies $\err_{T^h}(g) \le \alpha$ with probability $1-\beta$. This is equivalent to $\dis_{T_\X}(h, g)\le \alpha$. Suppose we choose $\lvert U\rvert = \lvert V\rvert = \lvert T\rvert / 2$ in $\Aux$. Note that $h$ and $\bar{h}$ agree on $V_\X$, the realizable generalization property (Lemma~\ref{lem:rea_gen}) implies that $\dis_{\D_\X}(h, \bar{h}) \le \alpha$ with probability $1-\beta$. Applying Lemma~\ref{lem:rea_gen} again (over $T_\X$), we have $\dis_{T_\X}(h, \bar{h})\le 2\alpha$ with probability $1-\beta$. Therefore, \begin{equation*}
        \dis_{U_\X}(g, \bar{h})\le \dis_{T_\X}(g, \bar{h}) \le \dis_{T_\X}(g, h) + \dis_{T_\X}(h, \bar{h}) \le 3\alpha.
    \end{equation*}

    We then bound the generalization disagreement between $g$ and $\bar{h}$ using the generalization property of DP. By Claim~\ref{cla:priv_aux}, $\Aux$ is $(O(1), O(\delta))$-differentially private with respect to $U$. Therefore, it is also $(O(1), O(\delta + \beta / \lvert U\rvert))$-differentially private. Then by Lemma~\ref{lem:dp_gen} and the fact that $\lvert U\rvert =\lvert T\rvert / 2\ge C\ln(1/\beta)/\alpha$ for some large constant $C$, we have
    \begin{align*}
        \dis_{\D_\X}(g, \bar{h}) &\le O\left(\dis_{U_\X}(g, \bar{h}) + \frac{1}{\lvert U\rvert}\log\left(\frac{1}{\delta\lvert U\rvert+\beta}\right)\right) \\
        &\le O\left(\alpha + \frac{1}{\lvert U\rvert}\log\left(\frac{1}{\beta}\right)\right) \\
        &\le O(\alpha)
    \end{align*}
    with probability $1-O(\delta\lvert U\rvert + \beta)$.

    Putting all things together, the union bound ensures that with probability $1-O(\delta\lvert U\rvert + \beta) = 1 - O(\varepsilon\delta n+\beta)$, we have
    \begin{align*}
        \err_{\D}(g) &\le \err_\D(h) + \dis_{\D_\X}(h, g) \\
        &\le \inf_{c\in \C} \err_\D(c) + \alpha + \dis_{\D_\X}(h, \bar{h}) +  \dis_{\D_\X}(g, \bar{h}) \\
        &\le \inf_{c\in \C} \err_\D(c) + O(\alpha).\\
    \end{align*}
\end{proof}

Now we are able to prove our main theorem. In our transformation, we first invoke Lemma~\ref{lem:priv_emp} to create a $(1,\delta'=O(\delta / \varepsilon))$-differentially private empirical learner from the given $(\varepsilon,\delta)$-differentially private PAC learner. We then use this empirical learner as the private algorithm $\A$ in $\Agn$. Lemma~\ref{lem:priv_agn} ensures that $\Agn$ is $(O(\varepsilon), O(\varepsilon\delta')=O(\delta))$-differentially private. The final sample complexity follows from Lemma~\ref{lem:util_agn}.

\begin{theorem}
\label{thm:trans}
    Let $\varepsilon \le O(1)$. Suppose there is an $(\varepsilon, \delta)$-differentially private $(\alpha, \beta)$-PAC learner for $\C$ with sample complexity $m$. Then there exists an $(\varepsilon,\delta)$-differentially private $(O(\alpha),O(\beta+\delta n))$-agnostic learner for $\C$ with sample complexity
    \begin{equation*}
        n = O\left(m + \frac{\vc(\C)\log(1/\alpha) + \log(1/\beta)}{\alpha\varepsilon} + \frac{\vc(\C) + \log(1/\beta)}{\alpha^2}\right).
    \end{equation*}
    Moreover, if the original learner is proper, then the resulting learner is also proper.
\end{theorem}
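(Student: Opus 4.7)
The plan is to chain together Lemma~\ref{lem:priv_emp}, Lemma~\ref{lem:priv_agn}, and Lemma~\ref{lem:util_agn} in sequence, with the main work being to carefully track how the privacy parameters, accuracy parameters, and sample complexity transform at each step. Starting from the given $(\varepsilon,\delta)$-differentially private $(\alpha,\beta)$-PAC learner $\A$ with sample complexity $m$, I would first apply Lemma~\ref{lem:priv_emp} to convert it into a $(1,\delta')$-differentially private $(\alpha,\beta)$-PAC \emph{empirical} learner $\A'$ with $\delta' = O(\delta/\varepsilon)$ and sample complexity $m' = O(\varepsilon m)$. The ``moreover'' clause of Lemma~\ref{lem:priv_emp} preserves properness at this step.

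Next, I would plug $\A'$ into $\Agn$ with parameter $\varepsilon$. For privacy, Lemma~\ref{lem:priv_agn} applied to the $(1,\delta')$-DP learner $\A'$ yields an $(O(\varepsilon), O(\varepsilon\delta'))$-DP algorithm; substituting $\delta' = O(\delta/\varepsilon)$ collapses the second parameter to $O(\delta)$. To match the clean $(\varepsilon,\delta)$-DP guarantee claimed in the theorem statement, I would simply run the construction with $\varepsilon$ scaled down by the hidden multiplicative constant, which is legitimate under the assumption $\varepsilon \le O(1)$.

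For utility, Lemma~\ref{lem:util_agn} instantiated with $\A'$ as the $(1,\delta')$-DP $(\alpha,\beta)$-PAC empirical learner of sample complexity $m'$ shows that $\Agn$ is an $(O(\alpha), O(\beta + \varepsilon n \delta'))$-agnostic learner; again substituting $\delta' = O(\delta/\varepsilon)$ reduces the failure probability to $O(\beta + n\delta)$, matching the statement. The resulting sample complexity from Lemma~\ref{lem:util_agn} is
\begin{equation*}
n = O\!\left(\frac{m'}{\varepsilon} + \frac{\vc(\C)\log(1/\alpha) + \log(1/\beta)}{\alpha\varepsilon} + \frac{\vc(\C) + \log(1/\beta)}{\alpha^2}\right),
\end{equation*}
and since $m'/\varepsilon = O(m)$, this is exactly the claimed bound.

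The argument is essentially bookkeeping once the three lemmas are in hand, so I do not anticipate a real obstacle here; the only minor subtlety is keeping the two $\delta$'s straight (the outer $\delta$ from the theorem versus the $\delta'$ fed into $\Agn$) and verifying that each substitution leaves the final guarantee in the form advertised. Finally, for the properness claim: if the input learner $\A$ is proper, then $\A'$ from Lemma~\ref{lem:priv_emp} is proper, and since $\Agn$ outputs $\A'(T^h)$ unchanged, the resulting agnostic learner is proper as well.
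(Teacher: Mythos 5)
Your proposal is correct and is essentially identical to the paper's proof: both chain Lemma~\ref{lem:priv_emp}, Lemma~\ref{lem:priv_agn}, and Lemma~\ref{lem:util_agn} in the same order with the same parameter bookkeeping, and both handle the final hidden constants on the privacy parameters by a constant-factor rescaling (the paper phrases this as one more application of privacy-amplification-by-subsampling, while you phrase it as running $\Agn$ with a proportionally smaller $\varepsilon$; these are the same move).
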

\begin{proof}
    By Lemma~\ref{lem:priv_emp}, there exists a $(1, \delta')$-differentially private $(\alpha,\beta)$-PAC empirical learner $\A$ for $\C$ with sample complexity $O(\varepsilon m)$, where $\delta' = O(\delta / \varepsilon)$. Then by Lemma~\ref{lem:priv_agn}, we have that $\Agn$ is $(O(\varepsilon), O(\varepsilon\delta')=O(\delta))$-differentially private. Moreover, by Lemma~\ref{lem:util_agn}, $\Agn$ is an $(O(\alpha), O(\beta+\varepsilon \delta' n)=O(\beta + \delta n))$-agnostic learner with sample complexity
    \begin{equation*}
        n = O\left(m + \frac{\vc(\C)\log(1/\alpha) + \log(1/\beta)}{\alpha\varepsilon} + \frac{\vc(\C) + \log(1/\beta)}{\alpha^2}\right).
    \end{equation*}

    The constant factors on the privacy parameters can be removed by the privacy-amplification-by-subsampling technique. Furthermore, if the original learner is proper, then Lemma~\ref{lem:priv_emp} suggests that $\A$ is also proper. Therefore, $\Agn$ is proper since it outputs $\A(T^h)$.
\end{proof}

Ignoring logarithmic factors, the second term in the resulting sample complexity is dominated by $m$~\citep{bun2015differentially}. Hence, the extra sample complexity introduced by our transformation is $\widetilde{O}(\vc(\C)/\alpha^2)$, which is near-optimal.
\section{Private Prediction}

In this section, we provide an algorithm for private prediction in the agnostic setting with a nearly optimal sample complexity of $\widetilde{O}(\vc(\C)/\alpha\varepsilon + \vc(\C)/\alpha^2)$. We need the following algorithm for private prediction in the realizable setting~\citep{dwork2018privacy}.
\begin{lemma}
\label{lem:priv_pred}
    Let $r=\lceil 6\ln(4/\alpha)/\varepsilon\rceil$ and $\A'$ be a PAC learning algorithm. Suppose hypotheses $g_1,\dots, g_r$ are obtained by running $r$ instances of $\A'$ on disjoint portions of the input dataset $S$. Then there exists an $\varepsilon$-differentially private prediction algorithm $\A$ that answers a prediction query $x$ based on some aggregation mechanism over $\{g_1(x),\dots,g_r(x)\}$ such that
    \begin{equation*}
        \forall i\in[r],\err_{\D}(g_i)\le \alpha/4 \Rightarrow \err_{\D}(\A(S,\cdot))\le \alpha.
    \end{equation*}
\end{lemma}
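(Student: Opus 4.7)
The plan is to instantiate the aggregation as an exponential mechanism over the binary label set. On a prediction query $x$, let $V_b(x) = |\{i\in[r] : g_i(x)=b\}|$ for $b\in\{0,1\}$, and let $\A(S,x)$ output $b$ with probability proportional to $\exp(\varepsilon V_b(x)/2)$.

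Privacy follows from two simple observations. Because the $r$ instances of $\A'$ operate on disjoint portions of $S$, swapping any single entry of $S$ can alter at most one $g_i$, so each of $V_0(x),V_1(x)$ changes by at most $1$. The score function therefore has sensitivity $1$ in $S$ for any fixed $x$, and Lemma~\ref{lem:prop_exp} guarantees that $\A(S,x)$ is $\varepsilon$-differentially private with respect to $S$.

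For utility, fix any realization $(g_1,\ldots,g_r)$ with $\err_\D(g_i)\le \alpha/4$ for every $i$. For $(x,y)\sim\D$, define $M(x,y)=|\{i : g_i(x)\ne y\}|$, so $V_y=r-M$ and $V_{1-y}=M$. Then the exponential mechanism gives
\[
\Pr_\A[\A(S,x)\ne y] \;=\; \frac{\exp(\varepsilon M/2)}{\exp(\varepsilon(r-M)/2)+\exp(\varepsilon M/2)} \;=\; \frac{1}{1+\exp(\varepsilon(r-2M)/2)}.
\]
When $M\le r/3$, this is at most $\exp(-\varepsilon r/6)$, which is at most $\alpha/4$ by the choice $r=\lceil 6\ln(4/\alpha)/\varepsilon\rceil$. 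Moreover $\mathbb{E}_{(x,y)}[M] = \sum_i \err_\D(g_i)\le r\alpha/4$, so Markov's inequality yields $\Pr_{(x,y)}[M > r/3] \le 3\alpha/4$. Integrating over $(x,y)$ (and trivially bounding the conditional error by $1$ on the exceptional event) gives
\[
\err_\D(\A(S,\cdot)) \;\le\; 3\alpha/4 + \alpha/4 \;=\; \alpha,
\]
which is the desired conclusion.

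There is no real obstacle; the only subtlety is calibrating constants so that the Markov threshold at $r/3$ and the exponential-mechanism tail at gap $r/3$ add up to exactly $\alpha$ under the prescribed $r=\lceil 6\ln(4/\alpha)/\varepsilon\rceil$. A Laplace-noise majority vote would give a similar argument but with a slightly looser tail bound; the exponential mechanism provides the cleanest constant matching.
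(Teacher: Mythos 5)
Your proof is correct, and it is essentially the soft-majority (exponential-mechanism aggregation) construction of Dwork and Feldman that the paper cites for this lemma rather than proving directly. The privacy argument is right: disjointness of the $r$ sub-datasets means a single swap in $S$ changes at most one $g_i$, so each vote count $V_b(x)$ has sensitivity $1$, and the exponential mechanism over the two labels is $\varepsilon$-DP for each fixed query $x$. The utility argument also checks out: for $M\le r/3$ the per-query error probability is at most $\frac{1}{1+\exp(\varepsilon r/6)}\le \exp(-\varepsilon r/6)\le \alpha/4$ with $r=\lceil 6\ln(4/\alpha)/\varepsilon\rceil$, and since $\mathbb{E}_{(x,y)\sim\D}[M]=\sum_i\err_\D(g_i)\le r\alpha/4$, Markov's inequality gives $\Pr[M> r/3]\le 3\alpha/4$, so the total expected error is at most $3\alpha/4+\alpha/4=\alpha$. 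The constants line up exactly with the lemma's statement, confirming this is the intended construction.
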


Like private agnostic learning, we run $\Agn$ with the private prediction algorithm $\A$ in the above lemma. The privacy analysis remains unchanged. However, the utility guarantee no longer holds because $\A$ only preserves privacy for a single prediction rather than the entire hypothesis, prohibiting us from applying the generalization property of DP.

To circumvent this issue, we choose the base learner $\A'$ to be an algorithm that outputs some concept in $\C$ that is consistent with the (relabeled) dataset. This allows us to leverage the realizable generalization bound to argue that with high probability, the generalization error (with respect to the relabeled distribution) of every instance of $\A'$ is small no matter which concept $h\in\C$ is selected to relabel the dataset, indicating that the generalization error of $\A$ is also small. That is to say, the generalization error of $\A$ is close to that of $h$ with respect to the original distribution.

\begin{theorem}[Theorem~\ref{thm:pred_intro} Restated]
\label{thm:pred}
    Let $\varepsilon \le O(1)$. There exists an $\varepsilon$-differentially private prediction algorithm that $(\alpha, \beta)$-agnostically learns $\C$ with sample complexity
    \begin{equation*}
        O\left(\frac{\vc(\C)\log^2(1/\alpha)+\log(1/\alpha)\log(\log(1/\alpha)/\beta)}{\alpha\varepsilon} + \frac{\vc(\C)+\log(1/\beta)}{\alpha^2}\right).
    \end{equation*}
\end{theorem}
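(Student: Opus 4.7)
The plan is to instantiate $\Agn$ (Algorithm~\ref{alg:agnostic}) with the realizable-setting private prediction algorithm $\A$ from Lemma~\ref{lem:priv_pred}, using as its base learner $\A'$ any (non-private) ERM for $\C$ that, on any dataset consistent with some $c\in\C$, returns an arbitrary such consistent concept. To satisfy the precondition of Lemma~\ref{lem:priv_agn}, I instantiate $\A$ with privacy parameter $1$, so $r=\lceil 6\ln(4/\alpha)\rceil=O(\log(1/\alpha))$ copies of $\A'$ are run on disjoint portions of the relabeled dataset $T^h$.

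\emph{Privacy.} The proof of Lemma~\ref{lem:priv_agn} applies pointwise for each query $x$: since $\A(\cdot,x)$ is $1$-differentially private with respect to its input dataset, the same argument (with $\delta=0$) shows that $\Agn(\cdot,x)$ is pure $O(\varepsilon)$-DP with respect to $S$, for every $x$. Rescaling the constant in $\varepsilon$ yields the desired $\varepsilon$-DP prediction algorithm. This per-query reduction is legitimate because the proof of Lemma~\ref{lem:priv_agn} never uses anything beyond a single-output DP bound on $\A$.

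\emph{Utility.} I argue in three steps. First, Claim~\ref{cla:util_rel} guarantees that with probability $1-\beta$, the hypothesis $h$ selected by $\Rel$ satisfies $\err_\D(h)\le \inf_{c\in\C}\err_\D(c)+\alpha$. Second, since each $g_i=\A'(T_i^h)$ is consistent with $T_i^h$ by construction, $\dis_{T_i}(g_i,h)=0$; applying the uniform realizable generalization bound (Lemma~\ref{lem:rea_gen}) over pairs of concepts in $\C$ to the i.i.d.\ sample $T_i\sim\D_\X^{|T_i|}$ with failure parameter $\beta/r$, and union-bounding over $i\in[r]$, gives $\dis_{\D_\X}(g_i,h)\le\alpha/4$ for every $i$ simultaneously, provided $|T_i|=\Omega((\vc(\C)\log(1/\alpha)+\log(r/\beta))/\alpha)$. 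Letting $\D^h$ denote the distribution that draws $x\sim\D_\X$ and returns $(x,h(x))$, this says $\err_{\D^h}(g_i)\le\alpha/4$ for every $i$. Third, invoking Lemma~\ref{lem:priv_pred} with evaluation distribution $\D^h$ yields $\err_{\D^h}(\A(T^h,\cdot))\le\alpha$; the triangle inequality $\err_\D(f)\le\err_{\D^h}(f)+\err_\D(h)$ (which extends to randomized $f$ by linearity of expectation) then gives $\err_\D(\A(T^h,\cdot))\le \inf_{c\in\C}\err_\D(c)+O(\alpha)$.

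The main obstacle is that Lemma~\ref{lem:priv_pred} is phrased in terms of the data-generating distribution, whereas I need to evaluate on the data-dependent pseudo-distribution $\D^h$. This is resolved by observing that the implication in Lemma~\ref{lem:priv_pred} is really a deterministic property of the noisy aggregation mechanism, holding for any fixed evaluation distribution once the hypotheses $g_1,\dots,g_r$ are fixed, so it applies to $\D^h$ a posteriori. The sample-complexity bookkeeping then gives $|T|=r|T_i|=\widetilde{O}((\vc(\C)\log^2(1/\alpha)+\log(1/\alpha)\log(\log(1/\alpha)/\beta))/\alpha)$, Claim~\ref{cla:util_rel} additionally requires $|W|=\widetilde{O}((\vc(\C)+\log(1/\beta))/\alpha^2)$, and since $|T|=\lceil\varepsilon n\rceil$ the total sample complexity matches the stated bound.
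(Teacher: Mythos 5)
Your proposal is correct and follows essentially the same route as the paper: instantiate $\Agn$ with the aggregation-based private prediction algorithm of Lemma~\ref{lem:priv_pred} built from a consistency oracle $\A'$, invoke Lemma~\ref{lem:priv_agn} for privacy, and chain Claim~\ref{cla:util_rel}, the realizable generalization bound applied per chunk $T_i$, and Lemma~\ref{lem:priv_pred} (read as a deterministic implication applied to $\D^h$) for utility. You are somewhat more explicit than the paper about two points it treats implicitly — that Lemma~\ref{lem:priv_agn} should be applied query-by-query to $\A(\cdot,x)$, and that Lemma~\ref{lem:priv_pred}'s guarantee transfers to the data-dependent distribution $\D^h$ — but these are clarifications of the same argument, not a different one.
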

\begin{proof}
    Consider an algorithm $\A'$ that arbitrarily selects a concept from $\C$ that is consistent with the input dataset. Let $\A$ be a $1$-differentially private mechanism constructed using Lemma~\ref{lem:priv_pred} with the base learner $\A'$. Then by Lemma~\ref{lem:priv_agn}, the overall algorithm $\Agn$ is $O(\varepsilon)$-differentially private.

    We now prove the accuracy of $\Agn$. Let $\D$ be the underlying distribution and $\D_\X$ be its marginal distribution on $\X$. Set $n' \ge  C\cdot \frac{\vc(\C)\ln(1/\alpha) + \ln(r/\beta)}{\alpha}$ for some constant $C$ and $r=\lceil 6\ln(4/\alpha)\rceil$. Let $T_\X'=\{x_1,\dots,x_{n'}\}$ be some unlabeled dataset, where each $x_i$ is drawn i.i.d. from $D_\X$. When $C$ is sufficiently large, the realizable generalization bound (Lemma~\ref{lem:rea_gen}) suggests that with probability $1-\beta / r$, it holds that $\dis_{\D_\X}(h_1, h_2)\le \alpha / 4$ for any $h_1,h_2\in\C$ that are consistent on $T_\X'$. Let $S=T{\circ}W$ be the entire input dataset such that
    \begin{equation*}
        \lvert T\rvert\ge rn' = O\left(\frac{\vc(\C)\log^2(1/\alpha) + \log(1/\alpha)\log(\log(1/\alpha)/\beta)}{\alpha}\right)
    \end{equation*} and $h$ be the concept chosen by $\Rel$ for relabeling. Then by Lemma~\ref{lem:priv_pred} and the union bound, it holds with probability $1-\beta$ that $\err_{\D^h}(\A(T^h,\cdot))\le \alpha$, where $\D^h$ is the distribution obtained by labeling $\D_\X$ with $h$.

    By Claim~\ref{cla:util_rel}, we have $\err_{\D}(h)\le \inf_{c\in\C}\err_{\D}(c) + \alpha$ with probability $1-\beta$ given that $\lvert T\rvert\ge C_1\cdot\frac{\vc(\C)\ln(1/\alpha) + \ln(1/\beta)}{\alpha}$ and $\lvert W\rvert\ge C_2\cdot\frac{\vc(\C)+\ln(1/\beta)}{\alpha^2} + \frac{\lvert T\rvert}{6\varepsilon}$ for constants $C_1,C_2$. Therefore, the triangle inequality and the union bound imply that with probability $1-2\beta$, we have
    \begin{equation*}
        \err_{\D}(\Agn(S,\cdot)) = \err_{\D}(\A(T^h,\cdot))\le \err_{\D^h}(\A(T^h,\cdot)) + \err_{\D}(h)\le \inf_{c\in\C}\err_{\D}(c) + 2\alpha.
    \end{equation*}

    Adjusting $\varepsilon,\alpha,\beta$ by constant factors yields the desired result.
\end{proof}

\section*{Acknowledgements}
We thank anonymous COLT reviewers for helpful comments. The research was supported in part by a RGC RIF grant under contract R6021-20, RGC TRS grant under contract T43-513/23N-2, RGC CRF grants under contracts C7004-22G, C1029-22G and C6015-23G, NSFC project under contract 62432008, and RGC GRF grants under contracts 16207922, 16207423 and 16211123.
{

\bibliographystyle{plainnat}
\bibliography{references.bib}

\begin{thebibliography}{35}
\providecommand{\natexlab}[1]{#1}
\providecommand{\url}[1]{\texttt{#1}}
\expandafter\ifx\csname urlstyle\endcsname\relax
  \providecommand{\doi}[1]{doi: #1}\else
  \providecommand{\doi}{doi: \begingroup \urlstyle{rm}\Url}\fi

\bibitem[Abowd(2018)]{abowd2018us}
John~M Abowd.
\newblock The {U.S.} {C}ensus {B}ureau adopts differential privacy.
\newblock In \emph{Proceedings of the 24th ACM SIGKDD International Conference on Knowledge Discovery \& Data Mining}, page 2867, 2018.

\bibitem[Alon et~al.(2019)Alon, Livni, Malliaris, and Moran]{alon2019private}
Noga Alon, Roi Livni, Maryanthe Malliaris, and Shay Moran.
\newblock Private {PAC} learning implies finite {L}ittlestone dimension.
\newblock In \emph{Proceedings of the 51st Annual ACM Symposium on Theory of Computing}, pages 852--860, 2019.

\bibitem[Alon et~al.(2020)Alon, Beimel, Moran, and Stemmer]{alon2020closure}
Noga Alon, Amos Beimel, Shay Moran, and Uri Stemmer.
\newblock Closure properties for private classification and online prediction.
\newblock In \emph{Proceedings of the 33rd Conference on Learning Theory}, volume 125, pages 119--152, 2020.

\bibitem[Anthony and Bartlett(1999)]{anthony1999neural}
Martin Anthony and Peter~L. Bartlett.
\newblock \emph{Neural Network Learning: Theoretical Foundations}.
\newblock Cambridge University Press, 1999.

\bibitem[{Apple Differential Privacy Team}(2017)]{app2017apple}
{Apple Differential Privacy Team}.
\newblock Learning with privacy at scale.
\newblock \emph{Apple Machine Learning Journal}, 2017.

\bibitem[Bassily et~al.(2016)Bassily, Nissim, Smith, Steinke, Stemmer, and Ullman]{bassily2016algorithmic}
Raef Bassily, Kobbi Nissim, Adam Smith, Thomas Steinke, Uri Stemmer, and Jonathan Ullman.
\newblock Algorithmic stability for adaptive data analysis.
\newblock In \emph{Proceedings of the 48th Annual ACM Symposium on Theory of Computing}, pages 1046--1059, 2016.

\bibitem[Beimel et~al.(2010)Beimel, Kasiviswanathan, and Nissim]{beimel2010bounds}
Amos Beimel, Shiva~Prasad Kasiviswanathan, and Kobbi Nissim.
\newblock Bounds on the sample complexity for private learning and private data release.
\newblock In \emph{Proceedings of the 7th International Conference on Theory of Cryptography}, volume 5978, pages 437--454, 2010.

\bibitem[Beimel et~al.(2015)Beimel, Nissim, and Stemmer]{beimel2014learning}
Amos Beimel, Kobbi Nissim, and Uri Stemmer.
\newblock Learning privately with labeled and unlabeled examples.
\newblock In \emph{Proceedings of the 26th Annual ACM-SIAM Symposium on Discrete Algorithms}, pages 461--477, 2015.

\bibitem[Beimel et~al.(2019)Beimel, Nissim, and Stemmer]{beimel2019characterizing}
Amos Beimel, Kobbi Nissim, and Uri Stemmer.
\newblock Characterizing the sample complexity of pure private learners.
\newblock \emph{Journal of Machine Learning Research}, 20\penalty0 (146):\penalty0 1--33, 2019.

\bibitem[Blumer et~al.(1989)Blumer, Ehrenfeucht, Haussler, and Warmuth]{blumer1989learnability}
Anselm Blumer, Andrzej Ehrenfeucht, David Haussler, and Manfred~K Warmuth.
\newblock Learnability and the {V}apnik-{C}hervonenkis dimension.
\newblock \emph{Journal of the ACM}, 36\penalty0 (4):\penalty0 929--965, 1989.

\bibitem[Bun et~al.(2015)Bun, Nissim, Stemmer, and Vadhan]{bun2015differentially}
Mark Bun, Kobbi Nissim, Uri Stemmer, and Salil Vadhan.
\newblock Differentially private release and learning of threshold functions.
\newblock In \emph{Proceedings of the 56th Annual IEEE Symposium on Foundations of Computer Science}, pages 634--649, 2015.

\bibitem[Bun et~al.(2020)Bun, Livni, and Moran]{bun2020equivalence}
Mark Bun, Roi Livni, and Shay Moran.
\newblock An equivalence between private classification and online prediction.
\newblock In \emph{Proceedings of the 61st Annual IEEE Symposium on Foundations of Computer Science}, pages 389--402, 2020.

\bibitem[Dagan and Feldman(2020)]{dagan2020pac}
Yuval Dagan and Vitaly Feldman.
\newblock {PAC} learning with stable and private predictions.
\newblock In \emph{Proceedings of the 33rd Annual Conference on Learning Theory}, volume 125, pages 1389--1410. PMLR, 2020.

\bibitem[Dwork and Feldman(2018)]{dwork2018privacy}
Cynthia Dwork and Vitaly Feldman.
\newblock Privacy-preserving prediction.
\newblock In \emph{Proceedings of the 31st Conference on Learning Theory}, volume~75, pages 1693--1702, 2018.

\bibitem[Dwork et~al.(2006{\natexlab{a}})Dwork, Kenthapadi, McSherry, Mironov, and Naor]{dwork2006our}
Cynthia Dwork, Krishnaram Kenthapadi, Frank McSherry, Ilya Mironov, and Moni Naor.
\newblock Our data, ourselves: Privacy via distributed noise generation.
\newblock In \emph{Proceedings of the 25th Annual International Conference on the Theory and Applications of Cryptographic Techniques}, pages 486--503, 2006{\natexlab{a}}.

\bibitem[Dwork et~al.(2006{\natexlab{b}})Dwork, McSherry, Nissim, and Smith]{dwork2006calibrating}
Cynthia Dwork, Frank McSherry, Kobbi Nissim, and Adam Smith.
\newblock Calibrating noise to sensitivity in private data analysis.
\newblock In \emph{Proceedings of the 3rd Conference on Theory of Cryptography}, pages 265--284, 2006{\natexlab{b}}.

\bibitem[Dwork et~al.(2015{\natexlab{a}})Dwork, Feldman, Hardt, Pitassi, Reingold, and Roth]{dwork2015reusable}
Cynthia Dwork, Vitaly Feldman, Moritz Hardt, Toniann Pitassi, Omer Reingold, and Aaron Roth.
\newblock The reusable holdout: {P}reserving validity in adaptive data analysis.
\newblock \emph{Science}, 349\penalty0 (6248):\penalty0 636--638, 2015{\natexlab{a}}.

\bibitem[Dwork et~al.(2015{\natexlab{b}})Dwork, Feldman, Hardt, Pitassi, Reingold, and Roth]{dwork2015preserving}
Cynthia Dwork, Vitaly Feldman, Moritz Hardt, Toniann Pitassi, Omer Reingold, and Aaron~Leon Roth.
\newblock Preserving statistical validity in adaptive data analysis.
\newblock In \emph{Proceedings of the 47th Annual ACM symposium on Theory of Computing}, pages 117--126, 2015{\natexlab{b}}.

\bibitem[Feldman and Steinke(2017)]{feldman2017generalization}
Vitaly Feldman and Thomas Steinke.
\newblock Generalization for adaptively-chosen estimators via stable median.
\newblock In \emph{Proceedings of the 27th Conference on Learning Theory}, pages 728--757, 2017.

\bibitem[Feldman and Xiao(2014)]{feldman2014sample}
Vitaly Feldman and David Xiao.
\newblock Sample complexity bounds on differentially private learning via communication complexity.
\newblock In \emph{Proceedings of the 27th Conference on Learning Theory}, volume~35, pages 1000--1019, 2014.

\bibitem[Feldman et~al.(2009)Feldman, Gopalan, Khot, and Ponnuswami]{feldman2009agnostic}
Vitaly Feldman, Parikshit Gopalan, Subhash Khot, and Ashok~Kumar Ponnuswami.
\newblock On agnostic learning of parities, monomials, and halfspaces.
\newblock \emph{SIAM Journal on Computing}, 39\penalty0 (2):\penalty0 606--645, 2009.

\bibitem[Ghazi et~al.(2021)Ghazi, Golowich, Kumar, and Manurangsi]{ghazi2021sample}
Badih Ghazi, Noah Golowich, Ravi Kumar, and Pasin Manurangsi.
\newblock Sample-efficient proper {PAC} learning with approximate differential privacy.
\newblock In \emph{Proceedings of the 53rd Annual ACM Symposium on Theory of Computing}, pages 183--196, 2021.

\bibitem[Haussler(1992)]{haussler1992decision}
David Haussler.
\newblock Decision theoretic generalizations of the {PAC} model for neural net and other learning applications.
\newblock \emph{Information and Computation}, 100\penalty0 (1):\penalty0 78--150, 1992.

\bibitem[Hopkins et~al.(2022)Hopkins, Kane, Lovett, and Mahajan]{hopkins2022realizable}
Max Hopkins, Daniel~M Kane, Shachar Lovett, and Gaurav Mahajan.
\newblock Realizable learning is all you need.
\newblock In \emph{Proceedings of the 35th Annual Conference on Learning Theory}, pages 3015--3069. PMLR, 2022.

\bibitem[Jung et~al.(2020)Jung, Ligett, Neel, Roth, Sharifi-Malvajerdi, and Shenfeld]{jung2020new}
Christopher Jung, Katrina Ligett, Seth Neel, Aaron Roth, Saeed Sharifi-Malvajerdi, and Moshe Shenfeld.
\newblock A new analysis of differential privacy's generalization guarantees.
\newblock In \emph{11th Innovations in Theoretical Computer Science Conference}, volume 151, page~31, 2020.

\bibitem[Kasiviswanathan et~al.(2011)Kasiviswanathan, Lee, Nissim, Raskhodnikova, and Smith]{kasiviswanathan2011can}
Shiva~Prasad Kasiviswanathan, Homin~K Lee, Kobbi Nissim, Sofya Raskhodnikova, and Adam Smith.
\newblock What can we learn privately?
\newblock \emph{SIAM Journal on Computing}, 40\penalty0 (3):\penalty0 793--826, 2011.

\bibitem[Kearns et~al.(1994)Kearns, Schapire, and Sellie]{kearns1994toward}
Michael~J Kearns, Robert~E Schapire, and Linda~M Sellie.
\newblock Toward efficient agnostic learning.
\newblock \emph{Machine Learning}, 17:\penalty0 115--141, 1994.

\bibitem[Li et~al.(2024)Li, Wang, and Ye]{li2024improved}
Bo~Li, Wei Wang, and Peng Ye.
\newblock Improved bounds for pure private agnostic learning: item-level and user-level privacy.
\newblock In \emph{Proceedings of the 41st International Conference on Machine Learning}, pages 28870--28889, 2024.

\bibitem[McSherry and Talwar(2007)]{mcsherry2007mechanism}
Frank McSherry and Kunal Talwar.
\newblock Mechanism design via differential privacy.
\newblock In \emph{Proceedings of the 48th Annual IEEE Symposium on Foundations of Computer Science}, pages 94--103, 2007.

\bibitem[Rogers et~al.(2016)Rogers, Roth, Smith, and Thakkar]{rogers2016max}
Ryan Rogers, Aaron Roth, Adam Smith, and Om~Thakkar.
\newblock Max-information, differential privacy, and post-selection hypothesis testing.
\newblock In \emph{Proceedings of the 57th Annual IEEE Symposium on Foundations of Computer Science}, pages 487--494. IEEE, 2016.

\bibitem[Sauer(1972)]{sauer1972density}
Norbert Sauer.
\newblock On the density of families of sets.
\newblock \emph{Journal of Combinatorial Theory, Series A}, 13\penalty0 (1):\penalty0 145--147, 1972.

\bibitem[Simon(1996)]{simon1996general}
Hans~Ulrich Simon.
\newblock General bounds on the number of examples needed for learning probabilistic concepts.
\newblock \emph{Journal of Computer and System Sciences}, 52\penalty0 (2):\penalty0 239--254, 1996.

\bibitem[Talagrand(1994)]{talagrand1994sharper}
Michel Talagrand.
\newblock Sharper bounds for gaussian and empirical processes.
\newblock \emph{The Annals of Probability}, 22:\penalty0 28--76, 1994.

\bibitem[Valiant(1984)]{valiant1984theory}
Leslie~G Valiant.
\newblock A theory of the learnable.
\newblock \emph{Communications of the ACM}, 27\penalty0 (11):\penalty0 1134--1142, 1984.

\bibitem[Vapnik and Chervonenkis(1971)]{vapnik1971uniform}
VN~Vapnik and A~Ya Chervonenkis.
\newblock On the uniform convergence of relative frequencies of events to their probabilities.
\newblock \emph{Theory of Probability and Its Applications}, 16\penalty0 (2):\penalty0 264--280, 1971.

\end{thebibliography}

}

\appendix
\section{Discussion on the Computational Complexity}

The relabeling procedure in our transformation has two main steps:
\begin{enumerate}
    \item Constructing a candidate set that contains all the labelings.
    \item Running the exponential mechanism.
\end{enumerate}
By Sauer's Lemma, the size of the candidate set is $O(n^{\vc(\C)})$, where $n$ is the sample size. If we do not require the time complexity to be polynomial in $\vc(\C)$ (i.e., we treat $\vc(\C)$ as a fixed constant), step 2 can be done with a polynomial number of calls of an ERM oracle. Although step 1 requires enumerating $2^n$ labelings for general classes, it can be done efficiently for classes with certain structures (e.g., point functions, thresholds, and axis-aligned rectangles). In these cases, the reduction is efficient. In summary, the transformation is inefficient in general, but can be made efficient for special classes.

We remark that there are classes (e.g., halfspaces) that are (computationally) easy to learn in the realizable setting but hard to learn in the agnostic setting under computational or cryptographic assumptions~\citep{feldman2009agnostic}. Since our algorithm is a reduction from agnostic learning to realizable learning, we cannot hope that it will be generally efficient given these hardness results. Hence, we focus on information-theoretic bounds in this work.

\end{document}